\newtheorem{proof}{Proof}[section]
\begin{document}

\title{Handling Data Heterogeneity in Federated Learning via Knowledge Distillation and Fusion}

\author{Xu Zhou, Xinyu Lei, \IEEEmembership{Member, IEEE}, Cong Yang, Yichun Shi, Xiao Zhang, and Jingwen Shi
\thanks{
This paper is officially published in IEEE Internet of Things Journal, 2023 \cite{zhou2023handling}.

This work was supported in part by the Collaborative Innovation Major Project of Zhengzhou under grant 20XTZX06013 and in part by the National Science Foundation under Grant CNS-2153393 (Corresponding author: Cong Yang).

Xu Zhou and Cong Yang are with the School of Cyber Science and Engineering, Zhengzhou University, Zhengzhou, Henan 450002, China (e-mail: 202022332015352@gs.zzu.edu.cn; wangyuanyc@zzu.edu.cn).

Xinyu Lei is with the Department of Computer Science, Michigan Technological University, Houghton, MI 49931, USA (e-mail: xinyulei@mtu.edu)

Yichun Shi and Jingwen Shi are  with the Department of Computer Science and Engineering, Michigan State University, East Lansing, MI 48824, USA (e-mail: shiyichu@msu.edu; shijingw@msu.edu)

Xiao Zhang is with the Department of Electrical and Computer Engineering, Duke University, Durham, NC, 27708, USA (email: xiao.zhang@duke.edu)
}
}



\maketitle

\begin{abstract}
\label{abstract}
Federated learning (FL) supports distributed training of a global machine learning model across multiple devices with the help of a central server.
However, data heterogeneity across different devices leads to the client model drift issue and results in model performance degradation and poor model fairness. 
To address the issue, we design \textbf{Fed}erated learning with global-local \textbf{K}nowledge \textbf{F}usion (FedKF) scheme in this paper. 
The key idea in FedKF is to let the server return the global knowledge to be fused with the local knowledge in each training round so that the local model can be regularized towards the global optima.
Therefore, the client model drift issue can be mitigated. 
In FedKF, we first propose the active-inactive model aggregation technique that supports a precise global knowledge representation. 
Then, we propose a data-free knowledge distillation (KD) approach to enable each client model to learn the global knowledge (embedded in the global model) while each client model can still learn the local knowledge (embedded in the local dataset) simultaneously, thereby realizing the global-local knowledge fusion process.
The theoretical analysis and intensive experiments demonstrate the superiority of FedKF over previous solutions. 

\end{abstract}

\begin{IEEEkeywords}
Federated learning, data heterogeneity, knowledge distillation.
\end{IEEEkeywords}

\section{Introduction}
\label{introduction}

\IEEEPARstart{F}{ederated} learning (FL) supports distributed training of a global machine learning model across multiple devices with the help of a central server.
The local dataset held by each device is never exchanged in FL, so the local dataset privacy is protected.
The most famous FL algorithm is FedAvg \cite{mcmahan2017communication}.
In one round of FedAvg training, a central server sends the global model weight to a portion of distributed devices (i.e., active clients).
Then, each device trains the model using the local data.
Next, each device sends the new model weight to the central server, which is responsible for computing the new aggregated averaged model.
Afterward, the server sends the new global model to some re-selected active clients to start the next round of FedAvg training.
After numerous rounds of training, the ML model can be well-trained.
Each device can exploit the well-trained ML model for different applications.
In this paper, we refer to devices as clients in FL.

In recent years, FL has been intensively studied in both
academia and industry \cite{zheng2022exploring, wu2022fedadapt, vu2022joint, al2022towards,liu2022distributed}.
One problem rooted in FL is that the heterogeneity of data across different clients can lead to significant model performance (i.e., test accuracy) degradation \cite{karimireddy2020scaffold, wang2020tackling, wang2021field}.
For example, multiple hospitals would like to collaboratively train a disease diagnosis ML model via FL.
Each hospital collects patient data independently, so their datasets are unbalanced and non-IID (i.e.,  heterogeneous).
One hospital might have few/no data samples belonging to a certain disease class.
When each hospital trains the model locally, its local objective may be far from the global objective.
Thus, the averaged global model can be away from the global optima.
This phenomenon is called client model drift in some literatures \cite{li2020federated, lin2020ensemble, zhu2021data, yao2021local}.
The client model drift may lead to poor global model performance.

Except for poor model performance, data heterogeneity may also lead to poor model fairness.
Consider a disease diagnosis ML model that is trained by multiple different hospitals via FL.
These hospitals collect patient data in different geographical areas with different race populations.
The FL-trained ML model can achieve high averaged model performance, but it may have large model performance variance across different hospitals (when testing on their local patient datasets).
If so, the ML model has biases against certain geographical areas (i.e., geographical discrimination) and race populations (i.e., race discrimination).
To mitigate such a model bias issue, the FL-trained ML model should also achieve high fairness, which can be measured by the degree of uniformity in model performance across different clients.
The considered concept of fairness is also named accuracy parity \cite{li2019fair}.

\begin{table*}[ht]
  \centering
  \caption{Comparison among different previous solutions and  ($\CIRCLE$: "Yes", $\Circle$: "No").}
  \label{tab:CMP}
  \resizebox{2\columnwidth}{!}{
  \begin{tabular}{cc|c|c|c|c}
  \hline
    \multicolumn{2}{c|}{\multirow{2}{*}{Solutions}} & Over FedAvg & \multirow{2}{*}{Fairness-Aware} & No Proxy & No Additional \\
    & & Model Performance & & Data Required & Info. Leakage \\
    \hline
    \hline
    \multicolumn{1}{c|}{\multirow{7}{*}{Model performance-based}} & FedAvg \cite{mcmahan2017communication} & $\Circle$  & $\Circle$  & $\CIRCLE$ & $\CIRCLE$ \\
    \cline{2-6}
    \multicolumn{1}{c|}{} & FedGen \cite{zhu2021data} & $\CIRCLE$ & $\Circle$ & $\CIRCLE$ & $\Circle$  \\
    \cline{2-6}
    \multicolumn{1}{c|}{} & CCVR \cite{luo2021no} & $\CIRCLE$ & $\Circle$ & $\CIRCLE$ & $\Circle$ \\
    \cline{2-6}
    \multicolumn{1}{c|}{} & FedDF \cite{lin2020ensemble} & $\CIRCLE$ & $\Circle$ & $\Circle$ & $\CIRCLE$ \\
    \cline{2-6}
    \multicolumn{1}{c|}{} & FedProx \cite{li2020federated} & $\CIRCLE$ & $\Circle$ & $\CIRCLE$ & $\CIRCLE$ \\
    \cline{2-6}
    \multicolumn{1}{c|}{} & MOON \cite{li2021model} & $\CIRCLE$ & $\Circle$ & $\CIRCLE$ & $\CIRCLE$ \\
    \cline{2-6}
    \multicolumn{1}{c|}{} & FedGKD \cite{yao2021local} & $\CIRCLE$ & $\Circle$ & $\CIRCLE$ & $\CIRCLE$ \\
    \hline
    \multicolumn{1}{c|}{Multiple-objective} & q-FFL \cite{li2019fair} & $\Circle$ & $\CIRCLE$ & $\CIRCLE$ & $\CIRCLE$ \\
    \cline{2-6}
    \multicolumn{1}{c|}{optimization-based} & FedMGDA+ \cite{hu2022federated} & $\Circle$ & $\CIRCLE$ & $\CIRCLE$ & $\CIRCLE$ \\
    \hline
    \multicolumn{2}{c|}{\textbf{FedKF} (ours)} & $\CIRCLE$ & $\CIRCLE$ & $\CIRCLE$ & $\CIRCLE$ \\
\hline
\end{tabular}
}
\end{table*}

To handle data heterogeneity in FL, several previous solutions have been developed.
These solutions can be roughly divided into three categories: 1) model performance-based solutions \cite{li2020federated, zhu2021data, li2021model, yao2021local, lin2020ensemble, luo2021no}, 2) multiple-objective optimization-based solutions \cite{li2019fair, hu2022federated}, and 3) personalized FL solutions \cite{t2020personalized, fallah2020personalized, ammad2019federated}.
For the model performance-based solutions, they merely consider improving the model accuracy in heterogeneous FL, so the fairness issue is not well addressed.
As to the multiple-objective optimization-based solutions, they aim to optimize multiple objectives such as model performance, fairness, robustness, etc.
However, they usually trade off model performance for higher fairness, so the model performance in these solutions is worse than FedAvg.
Personalized FL solutions work on the post-model-training phase (i.e., model adaption phase); thus, they are a complementary approach to the solution proposed in this paper.
Consequently, we aim to design a privacy-preserving FL scheme (that works in the global model training phase) that achieves both high (i.e., over FedAvg) model performance and high fairness for devices in heterogeneous FL.

Observing the limitations of the previous solutions, a question naturally arises: in heterogeneous FL, is it possible to design a privacy-preserving FL scheme (works in the global model training phase) that achieves a better model performance than FedAvg while still keeping fairness as high as possible?
To answer the question, we design \textbf{Fed}erated learning with global-local \textbf{K}nowledge \textbf{F}usion (FedKF) scheme.
FedKF does not need to trade off model performance (to below FedAvg) for fairness while still achieving high model fairness and a better model performance than FedAvg.
Therefore, FedKF yields a positive answer ``Yes" to the above question.
The key idea in FedKF is to let the server return global knowledge to shepherd the local training so that all local models will be regularized towards the global optima, thereby reducing the client model drift issue in each training round.
The global knowledge shepherded local training process is also named the global-local knowledge fusion process in this paper.
In FedKF, two major technical challenges should be well addressed.

The first technical challenge is how to represent global knowledge in each training round.
To address this issue, we design T1 (active-inactive model aggregation technique) to generate a model representing global knowledge.
In each training round, T1 aggregates not only the active clients'  model weights but also the inactive clients' cached model weights obtained in the previous training rounds.
Thus, T1 supports a more precise global knowledge representation.

The second technical challenge is how to use global knowledge to shepherd the local training in a privacy-preserving manner.
To tackle this challenge, we develop T2 (global-local knowledge fusion technique) to enable each local model to learn both global and local knowledge (embedded in each local dataset).
Specifically, we use knowledge distillation to transfer the global knowledge from the T1-aggregated model (teacher model) to each local model (student model) during the local training process.
However, knowledge distillation on a local dataset is hard to accurately transfer global knowledge due to the inconsistency in data distribution between the local and global datasets.
To address this issue, in T2, FedKF lets each client train a local generator to generate imitated samples that follow the distribution of the global dataset.
Thus, FedKF can facilitate knowledge distillation using the generated samples to transfer the global knowledge from the T1-aggregated model to the local model when the local model learns the local knowledge, realizing the global-local knowledge fusion process.

The main contributions of this paper are summarized as follows.

\begin{itemize}

\item We make the first step forward to design a privacy-preserving FL scheme that achieves both high (i.e., over FedAvg) model performance and high fairness for devices in heterogeneous FL.

\item We propose two techniques, T1 (active-inactive model aggregation technique) and T2 (global-local knowledge fusion technique), used in FedKF.
Both T1 and T2 can help to improve model performance and model fairness in heterogeneous FL.

\item We theoretically prove that FedKF can directly turn out to be a good solution to achieve high model performance and high fairness in heterogeneous agnostic FL.
Thus, FedKF has much broader impacts in reality.

\end{itemize}

The remainder of the paper is organized as follows.
Section \ref{related} reviews the related work,
In Section \ref{problem}, we introduce the background knowledge and formal problem statement.
In Section \ref{method}, the detailed FedKF design is presented.
Analysis of FedKF is performed in Section \ref{analysis}.
In Section \ref{experiments}, we evaluate FedKF and report the experimental results.
In Section \ref{conclusion}, some conclusions are drawn.

\section{Related Work}
\label{related}

To handle data heterogeneity in FL, we can develop solutions at two phases of FL: global model training phase and local model adaption phase. 
In the global model training phase, several solutions have been proposed.
These previous solutions can be further classified into two categories: model performance-based solutions and multiple-objective optimization-based solutions.
In the local model adaption phase, personalized FL solutions were developed.
These solutions are analyzed as follows.

\noindent
\textbf{Global Model Training Phase: Model Performance-based Solutions.}
These solutions merely consider improving the model accuracy in heterogeneous FL.
These solutions include FedGen \cite{zhu2021data}, CCVR \cite{luo2021no}, FedDF \cite{lin2020ensemble}, MOON \cite{li2021model}, FedProx \cite{li2020federated}, and FedGKD \cite{yao2021local}.
In FedGen, the server learns a lightweight generator to ensemble knowledge of local models in a data-free manner, then broadcasts it to clients to regularize the local training.
CCVR adjusts the global model using virtual representations sampled from an approximated Gaussian mixture model.
FedDF proposes a model fusion method using ensemble distillation where the server fuses the knowledge from local models and transfers it to the global model by using knowledge distillation on unlabeled proxy data.
FedGen and CCVR require each client to share additional local dataset information with the server.
More specifically, FedGen requires local label count information leaked to the server, while CCVR requires mean and covariance of local features for each class leaked to the server. 
Compared with FedAvg, these two solutions suffer from additional information leakage, resulting in a security level downgrade.
Moreover, compared with FedKF, FedGen needs extra communication overhead.
This is because FedGen needs to train a generator on the server with the label count and broadcast the generator to clients in each training round.
Compared with FedGen, FedKF employs a more accurate knowledge distillation approach. For knowledge distillation, FedGen uses pseudo features with labels generated by the generator directly to train the predictor of the local model.
In FedKF, global knowledge flows from the teacher model to the local model by knowledge distillation with pseudo samples generated by the local generator.
As to FedDF, it assumes there are additional proxy data available on the central server for ensemble distillation.
Such a strong assumption makes them impractical since the proxy data is unavailable in most cases.
FedProx can be viewed as a generalization and re-parametrization of FedAvg.
FedProx adds a proximal term to the local subproblem to restrict the local update closer to the initial (global) model.
MOON utilizes the similarity between model representations to correct the local training, i.e., conducting contrastive learning at the model level.
FedGKD fuses the knowledge from historical global models to guide the local model training where each client learns the global knowledge from past global models via adaptive knowledge distillation techniques.
For FedProx, MOON, and FedGKD, their performances are good on less heterogeneous data, but their performances decrease dramatically as data heterogeneity increases.
Note that none of the solutions in this category are fairness-aware in their initial design.

\noindent
\textbf{Global Model Training Phase: Multiple-objective Optimization-based Solutions.}
These solutions aim to optimize multiple objectives such as model performance, fairness, robustness, etc.
These solutions include q-FFL \cite{li2019fair} and FedMGDA+ \cite{hu2022federated}.
Q-FFL reweights the objective—assigning higher weights to clients with poor performance to encourage a more uniform accuracy distribution across clients in FL.
FedMGDA+ uses multi-objective optimization to obtain a fairer global model and guarantee that the global model converges to Pareto stationary solutions, refraining from sacrificing the performance of any client.
Nevertheless, the fairness gain of q-FFL and FedMGDA+ is obtained by sacrificing model performance, so they cannot achieve a better model performance than FedAvg in theory. 
For example, when setting the reweighting parameter $q=0$ in q-FFL, it reduces to FedAvg.
If $q>0$, the model performance of q-FFL is worse than FedAvg.

\noindent
\textbf{Local Model Adaption Phase: Personalized FL Solutions.}
These solutions leverage various strategies to adapt the global model to
each local dataset.
Some recent solutions include \cite{t2020personalized, fallah2020personalized, ammad2019federated}.
A good survey can be found in \cite{tan2022towards}.
Note that personalized FL solutions can be used together with the solutions in the global model training phase to further enhance the performance.
\section{Preliminaries and Problem Statement}
\label{problem}

In this section, we introduce the background knowledge and formally describe the problem statement.

\subsection{Preliminaries}

\noindent
\textbf{Federated Learning.}
FL is a distributed machine learning setting where a group of clients jointly train a high-quality centralized model without requiring clients to share their local private data \cite{konevcny2016federated}.
Suppose that there are $K$ clients jointly to train an ML model in FL. 
For the $k$-th client, it stores a local dataset $\mathcal{D}_{k}$.
FL aims to learning a global model weight $w$ over the global dataset $\mathcal{D} = \cup \{ \mathcal{D}_{k}\}_{k=1}^{K}$.
Accordingly, the objective of FL is to solve the following optimization problem \cite{mcmahan2017communication}:
\begin{equation}
    \min_{w} {f(w)} = \sum _{k=1}^{K} \frac{|\mathcal{D}_{k}|}{|\mathcal{D}|} F_{k}(w),
\end{equation}
where $F_{k}(w) = \mathbb{E}_{(x, y)\sim \mathcal{D}_{k}} [\ell (w;x,y)]$ represents the local objective function at the $k$-th client.

\noindent
\textbf{Knowledge Distillation.} 
Knowledge distillation (KD) technique enables a student model to learn from one or multiple teacher models \cite{hinton2015distilling, fukuda2017efficient}.
KD supports the student model compression while enabling the student model to inherit knowledge distilled from teacher(s).
The classic KD techniques (e.g., \cite{hinton2015distilling, lin2020ensemble}) require a proxy dataset during the distillation. 
To eliminate the requirement for the proxy dataset, data-free KD is proposed \cite{lopes2017data, chen2019data, fang2019data}.
A popular solution for data-free KD is to use the idea of generative adversarial networks (GANs) \cite{chen2019data}. 
A generator is trained to produce imitated training data (to replace the original training dataset) used for KD.

\subsection{Problem Statement}
We consider the local datasets unbalanced and non-IID (i.e., heterogeneous). 
Next, we define the following three metrics to facilitate our problem description. 
\newtheorem{definition}{Definition}

\begin{definition}
\label{def21}
(\textbf{Average Model Performance}) 
The average model performance (AMP) of model $w$ (denoted as $AMP_w$) is defined as the average test accuracy of model $w$ on the $K$ clients.
Let $a_{w}^{k} (k=1, \ldots, K)$ represent the test accuracy of model $w$ on $k$-th client's local test dataset.
$AMP_w$ can also be computed as  $AMP_w = \sum_{k=1}^{K}\frac{|\mathcal{D}_{k}|}{|\mathcal{D}|}a_{w}^{k}$.

\end{definition}

\begin{definition}
\label{def22}
 (\textbf{Fairness Metric}) The fairness metric (FM) of model $w$ (denoted as $FM_w$) is defined as $FM_w=\mathrm{Var}(a_{w}^{1}, \ldots, a_{w}^{K}),$ where $\mathrm{Var}$ denotes the variance. 
It is given by 
$\mathrm{Var}=\frac{1}{K} \sum_{k=1}^{K}(a_{w}^{k}-\overline{a}_{w})^2$ and $\overline{a}_{w}=\frac{1}{K}\sum_{k=1}^{K}a_{w}^{k}$.
A smaller $FM_w$ indicates a fairer model $w$. 

\end{definition}

\begin{definition}
\label{def23}
(\textbf{Worst-case Local Performance}) The worst-case local performance (WLP) of model $w$ (denoted as $WLP_w$) is defined as  $WLP_w=\min\{ a_{w}^{1}, \ldots, a_{w}^{K}\}$.

\end{definition}

\noindent
\textbf{WLP as Joint Performance Metric.}
According to Definitions (\ref{def21})-(\ref{def23}), the WLP metric can be treated as measuring the joint performance of AMP and FM. 
On the one hand, given the fixed AMP, a larger WLP tends to indicate a more uniform distribution of local performance (i.e., smaller FM) across clients. 
On the other hand, given the fixed FM, a larger WLP tends to imply a model with a higher AMP.

\begin{definition}
\label{def:privacy}
(\textbf{Privacy-Preserving}) We say an FL scheme is privacy-preserving if it follows the same security principle as FedAvg: for each client, only its model weight can be sent to other entities (e.g., server), and no information about local data can be shared directly.
\end{definition}

\noindent
\textbf{Design Goals.}
In this paper, we aim to design a privacy-preserving FL scheme FedKF that can increase AMP while reducing FM. 
Thus, FedKF can achieve high global performance and fairness simultaneously. 
If both AMP and FM are jointly considered, FedKF should keep WLP as large as possible.
To better explain why WLP can be treated as a joint performance metric, a numerical example is provided. 
Suppose that there are three models $w_1$, $w_2$, and $w_3$ trained over three clients via FL. 
Their parameter configurations are shown in Table \ref{tab:num}.
For $w_1$ and $w_2$, it holds that $AMP_{w_1}=AMP_{w_2}$. 
Since $WLP_{w_1}<WLP_{w_2}$, we have $FM_{w_1}>FM_{w_2}$.
For $w_1$ and $w_3$, it holds that $FM_{w_1}=FM_{w_3}$.
Since $WLP_{w_1}<WLP_{w_3}$, we have $AMP_{w_1}<AMP_{w_3}$.

\begin{table}[ht]
  \centering
  \caption{An numerical example.}
  \resizebox{1\columnwidth}{!}
  {
  \renewcommand{\arraystretch}{1.5}
  \begin{tabular}{c|c|c|c|c|c|c}
    \hline
      & Client 1 & Client 2 & Client 3 & AMP & FM & WLP \\
    \hline
    \hline
    $w_{1}$ & $a_{w_{1}}^{1} \!= \! 0.6$ & $a_{w_{1}}^{2} \!=\! 0.7$ & $a_{w_{1}}^{3} \!=\! 0.8$ & 0.7 & 0.00667 & 0.6\\
    \hline
    $w_{2}$ & $a_{w_{2}}^{1} \!=\! 0.65$ & $a_{w_{2}}^{2} \!= \!0.65$ & $a_{w_{2}}^{3} \!= \!0.8$ & 0.7 & 0.005 & 0.65 \\
    \hline
    $w_{3}$ & $a_{w_{3}}^{1} \!= \!0.7$ & $a_{w_{3}}^{2} \!=\! 0.8$ & $a_{w_{3}}^{3} \!=\! 0.9$ & 0.8 & 0.00667 & 0.7 \\
    \hline
  \end{tabular}
  }
  \label{tab:num}
\end{table}

\noindent
\textbf{Notations.}
To improve the readability of this paper, we summarize some frequently used notations in Table \ref{tb::not}.

\begin{table}[h]
  \centering
  \caption{Notations.}
  {
  \renewcommand{\arraystretch}{1.2}
  \begin{tabular}{l|l}
    \hline
    Notations & Meanings \\
    \hline
    \hline
    $K$ & the number of overall clients \\
    $m$ & the number of active clients \\
    $C$ & selection rate of active clients \\
    $T$ & the number of global rounds \\
    $E$ & the number of local epochs \\
    $B$ & local batch size \\
    $\alpha$ & concentration parameter of the Dirichlet distribution \\
    $\beta$ & the learning rate for training local generator \\
    $\eta$ & the learning rate for training local model \\
    $\lambda _{1}$ & the coefficient of one-hot loss for training generator \\
    $\lambda _{2}$ & the coefficient of activation loss for training generator \\
    $\gamma$ & the coefficient of KL loss for training local model \\
    $\mathcal{D}_{k}$ & the $k$-th client's local dataset \\
    $\mathcal{D}$ & global dataset \\
    AMP & Average Model Performance \\
    FM & Fairness Metric \\
    WLP & Worst-case Local Performance \\
    ACA model & active clients aggregated model \\
    OCA model & overall clients aggregated model \\
    $w^{t}$ & the ACA model in the $t$-th round \\
    $\hat{w}^{t}$ & the OCA model in the $t$-th round \\
    $w_{k}^{t}$ & the $k$-th client's local model in the $t$-th round \\
    $\theta_{k}$ & the $k$-th client's local generator \\
    \hline
  \end{tabular}
}
  \label{tb::not}
\end{table}
\begin{figure*}[ht]
  \centering
  \includegraphics[width=0.8\textwidth]{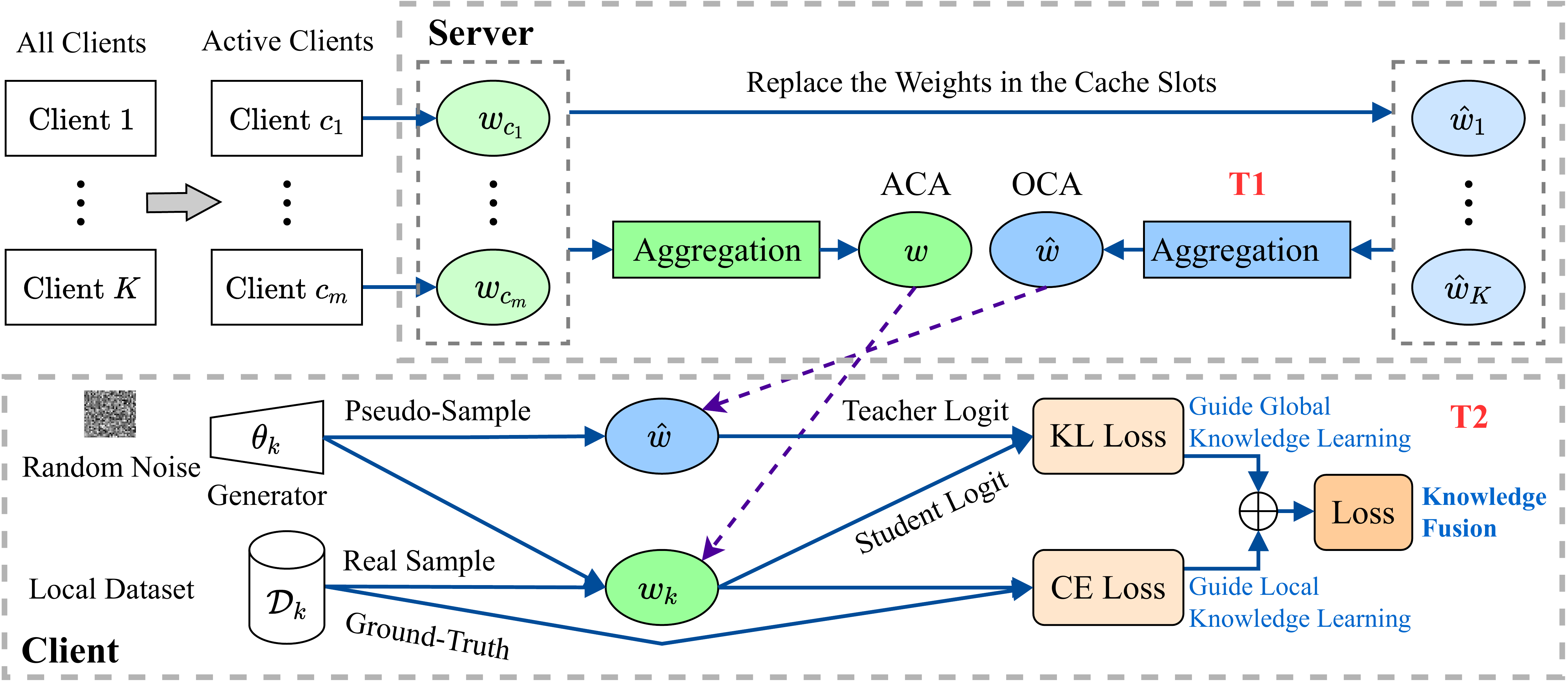}
  \caption{FedKF Overview. Two key techniques (i.e., T1 and T2) are developed in FedKF. In T1, we develop an active-inactive model aggregation technique to generate an OCA model that represents the global knowledge precisely. In T2, we develop the global-local knowledge fusion technique to enable the local model to learn both the global knowledge (embedded in the teacher model, i.e., the OCA model) and the local knowledge (embedded in the local dataset).}
  \label{Fig::FSpace}
\end{figure*}

\section{FedKF Design}
\label{method}

In this section, we first overview FedKF and two developed key techniques.
Then, we introduce the T2 (global-local knowledge fusion technique) used in FedKF.

\subsection{FedKF \& Key Techniques Overview}

\label{subsect:overview}

\noindent
\textbf{FedKF.}
An overview of FedKF is illustrated in Fig. \ref{Fig::FSpace}.
In FedKF, the server maintains $K$ different cache slots for storing the latest local models.
In each training round, only the selected active clients need to upload their local models to the server.
Thus, the $k$-th cache slot stores the local model uploaded from $k$-th client in the most recent training round when $k$-th client is selected to be active.
Informally, FedKF can be described as follows.

\begin{itemize}
\item \emph{Step 0}: In the last step of the training round $t-1$, the server aggregates all active clients' uploaded local models to get the active clients aggregated (ACA) model.
Meanwhile, FedKF aggregates both active and inactive clients' cached models in the cache slots to get the overall clients aggregated (OCA) model.
In this step, active and inactive refer to the clients' state in round $t-1$.

\noindent
\item \emph{Step 1}: In the training round $t$, a portion of clients are selected as active clients.
Let $\{c_{1}, \ldots, c_{m}\}$ denote the IDs of the selected clients.
Let $C$ represent the selection rate.
It follows that $m= C \cdot K$.
Then, the server broadcasts the ACA and OCA models to all active clients.

\noindent
\item \emph{Step 2}: On receipt of the two models from the server, each active client treats the ACA model as the local model $w_{k}$ and treats the OCA model as the teacher model $\hat{w}$.

\noindent
\item  \emph{Step 3}: FedKF employs the data-free KD technique to distill the knowledge of the teacher model to the local model.
In data-free KD, a generator is trained to generate pseudo-samples to facilitate knowledge transfer.
Meanwhile, the local dataset of each active client is used to train the local model.
Therefore, both the global knowledge (embedded in the teacher model) and local knowledge (embedded in the local dataset) are fused and transferred to the local model.

\noindent
\item  \emph{Step 4}: After global-local knowledge fusion, all active clients upload their local models to the server.
Each client's local model serves as the latest local model.

\noindent
\item  \emph{Step 5}: Based on the received active clients' local models, the server updates the weights in the corresponding cache slots.
The inactive clients' cache slots remain the same.
Then, the server re-computes the ACA model and the OCA model.
Next, if the model is well trained, then terminate the training process; otherwise, go to Step 1.

\end{itemize}

When the training is finished, either the ACA model or the OCA model serves as the final model to be used.
The detailed FedKF training algorithm is shown in Algorithm \ref{alg::train}.

Note that FedKF does not use the server-side generator training approach because the server-side generator training approach achieves much worse model performance (i.e., AMP) than our client-side generator training approach. The main reason is that for the client-side generator training approach, the generator is trained along with the KD process (i.e., the generator training and KD are trained synchronously), so the generator can generate more diversified samples used in KD. In contrast, in the server-side approach, a stationary generator (i.e., well-trained) is used to generate the samples used in KD, so the generated samples are less diversified, resulting in worse model performance.

\noindent
\textbf{Two Key Techniques.}
In FedKF, two key techniques are developed.
\begin{itemize}
\item \textbf{T1} (active-inactive model aggregation technique): We develop an active-inactive model aggregation technique to generate an OCA model that represents the global knowledge precisely.

\item \textbf{T2} (global-local knowledge fusion technique): We develop the global-local knowledge fusion technique to enable the local model to learn both the global knowledge (embedded in the teacher model) and the local knowledge (embedded in the local dataset).

\end{itemize}
For most previous solutions (e.g., FedAvg), only active clients' model weights are aggregated to generate the global model in each round.
In contrast, in T1, both active clients' model weights and inactive clients' cached model weights are aggregated to represent the global knowledge.
Hence, T1 supports a more precise global knowledge representation.
Note that the authors in \cite{li2019convergence} discuss a solution that uses the global weight to represent the weight of inactive clients in aggregation; it is a coarse global knowledge representation method.
T1 is a simple yet precise approach to generating the global model.
It is orthogonal to many previous solutions (e.g., FedAvg, FedProx), so T1 can also be used in these solutions to improve their performance.
In the following sections, T2 is elaborated.

\subsection{Data-Free Knowledge Distillation}
\label{sub:data-free}

Knowledge distillation (KD) technique enables a student model to learn from one or multiple teacher models \cite{hinton2015distilling, fukuda2017efficient}.
In FedKF, to distill knowledge from the global model (teacher model) to a local model (student model), the data used to train the global model is usually required.
However, in FL, data exchange is prohibited due to security concerns (see Definition \ref{def:privacy}).
Accordingly, FedKF employs the idea of data-free KD \cite{lopes2017data, chen2019data, fang2019data} to eliminate the requirement of the proxy data on the client side.

In data-free KD, a generator can be trained and then used for generating the imitated training samples.
The imitated training samples can be used to transfer knowledge from the teacher model to the student model.
Note that the generated imitated training samples do not need to be distributed very similarly to the real training samples.
The only requirement is that the generator training samples can be used to facilitate knowledge transfer.
Hence, the requirement for the generator in data-free KD and the generator in a traditional GAN is different.

\subsection{Loss Functions for Training Local Generator}
\label{subsect:TG}

Inspired by \cite{chen2019data}, we design the following loss functions used in the local generator training.
To facilitate the description, we first describe the following parameters.
Let $g(\theta_{k}; \cdot)$ be the output of the $k$-th client's local generator parameterized by $\theta_{k}$.
Let $f(\hat{w}; \cdot)$ and $h(\hat{w}; \cdot)$ be the feature vector output and the probability vector output of the teacher model parameterized by $\hat{w}$, respectively.
On input a random noise vector $z \sim \mathcal{N}(0, I)$, the generator outputs pseudo-sample $\hat{x}$ with $\hat{x} = g(\theta_{k}; z)$.
On input $\hat{x}$, the teacher model can output probability vector $\hat{\mathbf{p}}$ with $\hat{\mathbf{p}} = h(\hat{w}; \hat{x})$.

Based on the above definitions, the loss functions for training the local generator are introduced as follows.

\noindent
\textbf{One-Hot Loss Function.}
The pseudo-sample $\hat{x}$ is expected to be classified into one particular category concerned by the teacher model with a higher probability.
Then, pseudo-label $\hat{y}$ is calculated by $\hat{y} = \mathrm{arg} \max \hat{\mathbf{p}}$.
The one-hot loss function is defined as
\begin{equation}
    \mathcal{L}_{OH} = \mathbb{E}_{z \sim \mathcal{N}(0, I)} \left[ \mathrm{CE}\left(h\left(\hat{w}; g\left( \theta_{k}; z \right) \right), \hat{y} \right) \right],
\end{equation}
where $\mathrm{CE}$ is cross entropy. If $\mathcal{L}_{OH}$ is minimized, then a generated sample can be classified into one specific class with a significantly high probability.
This phenomenon occurs when real samples are used for training.

\noindent
\textbf{Information Entropy Loss Function.}
In order to force the generator to generate samples covering all classes, the information entropy loss is used to measure the uniformity of the class distribution.
Specifically, given a probability vector $\mathbf{p} = (p_{1}, p_{2}, \ldots, p_{d})$, the information entropy of $\mathbf{p}$ is calculated by $\mathrm{IE}(\mathbf{p}) = - \sum_{i=1}^{d}p_{i}\log p_{i}$.
The information entropy loss can be defined as

\begin{equation}
    \mathcal{L}_{IE} = - \mathrm{IE} (\mathbb{E}_{z \sim \mathcal{N}(0, I)} \left[ h\left(\hat{w}; g\left( \theta_{k}; z \right) \right) \right]).
\end{equation}
When $\mathcal{L}_{IE}$ moves to the minimum, the generator tends to generate samples for each class with roughly the same probability.
Thus, minimizing the information entropy loss can result in a training sample set in which the number of samples for each class is roughly the same.

\noindent
\textbf{Activation Loss Function.}
It is observed that the real training sample's feature vector tends to receive a higher activation value.
Thus, the activation loss function is defined as

\begin{equation}
    \mathcal{L}_{A} = -\mathbb{E}_{z \sim \mathcal{N}(0, I)} [\left \| f\left(\hat{w}; g\left( \theta_{k}; z \right) \right) \right \|_{1}],
\end{equation}
where $\left \| \cdot \right \| _{1}$ is the $l_{1}$ norm.

\noindent
\textbf{Total Loss Function.}
By taking the above three loss functions into consideration, the total loss function for the generator training is given by
\begin{equation}\label{eq:gloss}
    \mathcal{L}_{G} = \mathcal{L}_{IE} + \lambda_{1} \mathcal{L}_{OH} + \lambda_{2} \mathcal{L}_{A},
\end{equation}
where $\lambda_{1}$ and $\lambda_{2}$ are hyper parameters for balancing the three loss functions.

\subsection{Performance of Trained Generator}

In KD, theoretically, we can use randomly generated training samples to train the student model to mimic the behavior of the teacher model.
Note that a randomly generated training sample can be fed to the teacher model to get its label.
Then, the properly labeled samples can be used for the student model training.
However, this approach has low efficiency and usually cannot achieve high KD accuracy.
In FedKF, for the generator used in KD, if the generator can generate samples that are distributed relatively close to the real-world training samples, the KD process can be finished with high accuracy.
The generated samples do not need to be as accurate as some other applications (e.g., the generator required by deepfake \cite{westerlund2019emergence}).

In what follows, we conduct some experiments to demonstrate the performance of the trained generator used in data-free KD. In the experiments, we train a teacher model on centralized real-world training data.
Then, we use the teacher model to train a generator with the loss function (as defined in Eq. \ref{eq:gloss}) in a data-free manner.
Meanwhile, we use the generated samples to train a student model via KD.

The experiment settings are briefly introduced as follows.
The optimizers used in training the teacher model, student model, and generator are SGD, SGD, and Adam, respectively.
As for training the teacher and student model, the learning rate is set to 0.01 for LeNet-5 \cite{lecun1989backpropagation} and 0.1 for ResNet-8\cite{he2016deep}.
For training the generator, the learning rate is set to 0.2, 0.02, and 0.02 on the EMNIST\cite{cohen2017emnist}, CIFAR-10, and CIFAR-100\cite{cohen2017emnist} dataset, respectively.
For hyper parameters $\lambda_{1}$ and $\lambda_{2}$ in training the generator, we set them to \{0.2, 0.02\} on the EMNIST dataset, \{0.01, 0.002\} on the CIFAR-10 dataset, and \{0.01, 0.002\} on the CIFAR-100 dataset.

Fig. \ref{fig:reals} and \ref{fig:fakes} show the visualization results of averaged images on the EMNIST dataset and the generated dataset (using the local generator), respectively.
Although the generated
images are not very similar to the real images used in training, but it is sufficient for achieving good performance in KD (as demonstrated by the experimental results below).

\begin{figure}[h]
\centering
\subfigure[]{
    \centering
    \includegraphics[width=0.072\linewidth]{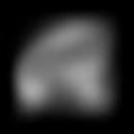}
}
\hspace{-4.5mm}
\subfigure[]{
    \centering
    \includegraphics[width=0.072\linewidth]{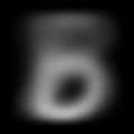}
}
\hspace{-4.5mm}
\subfigure[]{
    \centering
    \includegraphics[width=0.072\linewidth]{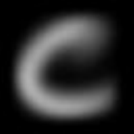}
}
\hspace{-4.5mm}
\subfigure[]{
    \centering
    \includegraphics[width=0.072\linewidth]{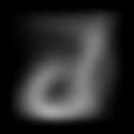}
}
\hspace{-4.5mm}
\subfigure[]{
    \centering
    \includegraphics[width=0.072\linewidth]{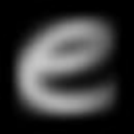}
}
\hspace{-4.5mm}
\subfigure[]{
    \centering
    \includegraphics[width=0.072\linewidth]{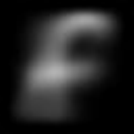}
}
\hspace{-4.5mm}
\subfigure[]{
    \centering
    \includegraphics[width=0.072\linewidth]{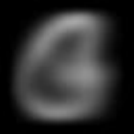}
}
\hspace{-4.5mm}
\subfigure[]{
    \centering
    \includegraphics[width=0.072\linewidth]{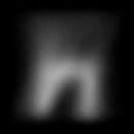}
}
\hspace{-4.5mm}
\subfigure[]{
    \centering
    \includegraphics[width=0.072\linewidth]{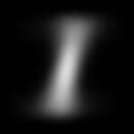}
}
\hspace{-4.5mm}
\subfigure[]{
    \centering
    \includegraphics[width=0.072\linewidth]{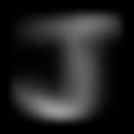}
}
\hspace{-4.5mm}
\subfigure[]{
    \centering
    \includegraphics[width=0.072\linewidth]{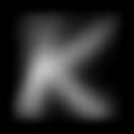}
}
\hspace{-4.5mm}
\subfigure[]{
    \centering
    \includegraphics[width=0.072\linewidth]{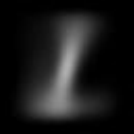}
}
\hspace{-4.5mm}
\subfigure[]{
    \centering
    \includegraphics[width=0.072\linewidth]{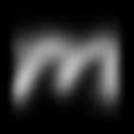}
}
\hspace{-4.5mm}
\subfigure[]{
    \centering
    \includegraphics[width=0.072\linewidth]{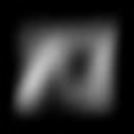}
}
\hspace{-4.5mm}
\subfigure[]{
    \centering
    \includegraphics[width=0.072\linewidth]{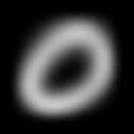}
}
\hspace{-4.5mm}
\subfigure[]{
    \centering
    \includegraphics[width=0.072\linewidth]{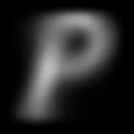}
}
\hspace{-4.5mm}
\subfigure[]{
    \centering
    \includegraphics[width=0.072\linewidth]{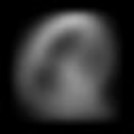}
}
\hspace{-4.5mm}
\subfigure[]{
    \centering
    \includegraphics[width=0.072\linewidth]{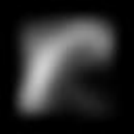}
}
\hspace{-4.5mm}
\subfigure[]{
    \centering
    \includegraphics[width=0.072\linewidth]{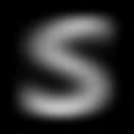}
}
\hspace{-4.5mm}
\subfigure[]{
    \centering
    \includegraphics[width=0.072\linewidth]{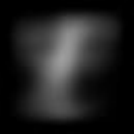}
}
\hspace{-4.5mm}
\subfigure[]{
    \centering
    \includegraphics[width=0.072\linewidth]{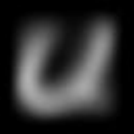}
}
\hspace{-4.5mm}
\subfigure[]{
    \centering
    \includegraphics[width=0.072\linewidth]{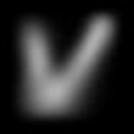}
}
\hspace{-4.5mm}
\subfigure[]{
    \centering
    \includegraphics[width=0.072\linewidth]{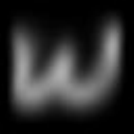}
}
\hspace{-4.5mm}
\subfigure[]{
    \centering
    \includegraphics[width=0.072\linewidth]{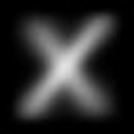}
}
\hspace{-4.5mm}
\subfigure[]{
    \centering
    \includegraphics[width=0.072\linewidth]{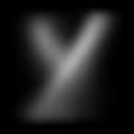}
}
\hspace{-4.5mm}
\subfigure[]{
    \centering
    \includegraphics[width=0.072\linewidth]{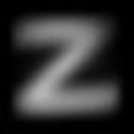}
}
\caption{Visualization of the averaged image in each category (from a to z) on the EMNIST dataset.}
\label{fig:reals}
\end{figure}

\begin{figure}[ht]
\centering
\subfigure[]{
    \centering
    \includegraphics[width=0.072\linewidth]{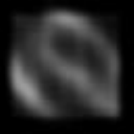}
}
\hspace{-4.5mm}
\subfigure[]{
    \centering
    \includegraphics[width=0.072\linewidth]{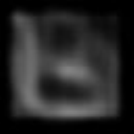}
}
\hspace{-4.5mm}
\subfigure[]{
    \centering
    \includegraphics[width=0.072\linewidth]{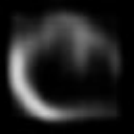}
}
\hspace{-4.5mm}
\subfigure[]{
    \centering
    \includegraphics[width=0.072\linewidth]{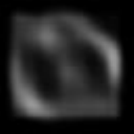}
}
\hspace{-4.5mm}
\subfigure[]{
    \centering
    \includegraphics[width=0.072\linewidth]{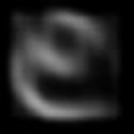}
}
\hspace{-4.5mm}
\subfigure[]{
    \centering
    \includegraphics[width=0.072\linewidth]{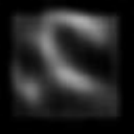}
}
\hspace{-4.5mm}
\subfigure[]{
    \centering
    \includegraphics[width=0.072\linewidth]{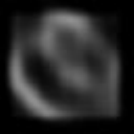}
}
\hspace{-4.5mm}
\subfigure[]{
    \centering
    \includegraphics[width=0.072\linewidth]{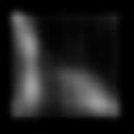}
}
\hspace{-4.5mm}
\subfigure[]{
    \centering
    \includegraphics[width=0.072\linewidth]{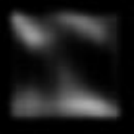}
}
\hspace{-4.5mm}
\subfigure[]{
    \centering
    \includegraphics[width=0.072\linewidth]{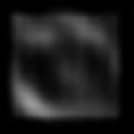}
}
\hspace{-4.5mm}
\subfigure[]{
    \centering
    \includegraphics[width=0.072\linewidth]{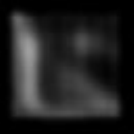}
}
\hspace{-4.5mm}
\subfigure[]{
    \centering
    \includegraphics[width=0.072\linewidth]{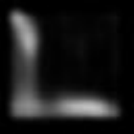}
}
\hspace{-4.5mm}
\subfigure[]{
    \centering
    \includegraphics[width=0.072\linewidth]{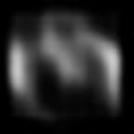}
}
\hspace{-4.5mm}
\subfigure[]{
    \centering
    \includegraphics[width=0.072\linewidth]{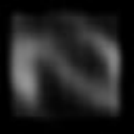}
}
\hspace{-4.5mm}
\subfigure[]{
    \centering
    \includegraphics[width=0.072\linewidth]{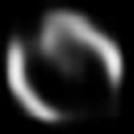}
}
\hspace{-4.5mm}
\subfigure[]{
    \centering
    \includegraphics[width=0.072\linewidth]{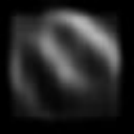}
}
\hspace{-4.5mm}
\subfigure[]{
    \centering
    \includegraphics[width=0.072\linewidth]{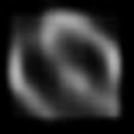}
}
\hspace{-4.5mm}
\subfigure[]{
    \centering
    \includegraphics[width=0.072\linewidth]{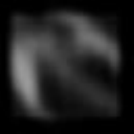}
}
\hspace{-4.5mm}
\subfigure[]{
    \centering
    \includegraphics[width=0.072\linewidth]{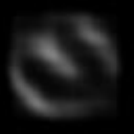}
}
\hspace{-4.5mm}
\subfigure[]{
    \centering
    \includegraphics[width=0.072\linewidth]{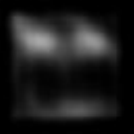}
}
\hspace{-4.5mm}
\subfigure[]{
    \centering
    \includegraphics[width=0.072\linewidth]{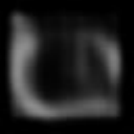}
}
\hspace{-4.5mm}
\subfigure[]{
    \centering
    \includegraphics[width=0.072\linewidth]{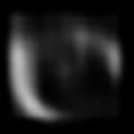}
}
\hspace{-4.5mm}
\subfigure[]{
    \centering
    \includegraphics[width=0.072\linewidth]{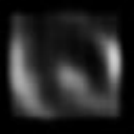}
}
\hspace{-4.5mm}
\subfigure[]{
    \centering
    \includegraphics[width=0.072\linewidth]{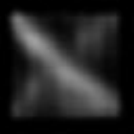}
}
\hspace{-4.5mm}
\subfigure[]{
    \centering
    \includegraphics[width=0.072\linewidth]{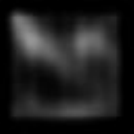}
}
\hspace{-4.5mm}
\subfigure[]{
    \centering
    \includegraphics[width=0.072\linewidth]{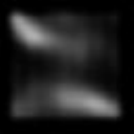}
}
\caption{Visualization of the averaged image in each category (from a to z) on the generated dataset (using the local generator).}
\label{fig:fakes}
\end{figure}

Table \ref{dfkd:acc} reports the performance of the teacher and student models on the different datasets.
The teacher models achieve 93.35\%, 90.78\%, and 67.14\% accuracies on the EMNIST, CIFAR-10, and CIFAR-100 datasets, respectively.
The student models using T2 obtain 92.25\%, 89.05\%, and 63.45\% accuracies without any real-world training data.
The performance of the student model is just slightly lower than the teacher model.

\begin{table}[h]
\centering
\caption{Performance of the teacher and student models in T2 on the different datasets.}
\label{dfkd:acc}
{
\renewcommand{\arraystretch}{1.4}
\begin{tabular}{c|cc|cc}
\hline
\multirow{2}{*}{Datasets} & \multicolumn{2}{c|}{Teacher} & \multicolumn{2}{c}{Student} \\
\cline{2-5}
                          & Model     & Accuracy (\%)   & Model       & Accuracy (\%) \\
\hline
\hline
EMNIST                    & LeNet-5   & 93.35          & LeNet-5      & 92.25 \\
\hline
CIFAR-10                  & ResNet-8  & 90.78          & ResNet-8     & 89.05 \\
\hline
CIFAR-100                 & ResNet-8  & 67.14          & ResNet-8     & 63.45 \\
\hline
\end{tabular}
}
\end{table}

\subsection{Loss Functions for Training Local Model}
\label{subsect:TS}

The local model is trained by T2 (global-local knowledge fusion technique).
The loss functions are introduced as follows.

\noindent
\textbf{KL Loss Function.}
FedKF allows each client to use the imitated training samples generated by the generator to distill the global knowledge from the teacher model to the local model.
Meanwhile, the local model learns the local knowledge from the local dataset.

Let $h(w_{k}; \cdot)$ be the probability vector output of the $k$-th client's local model parameterized by $w_{k}$.
We define the knowledge distillation loss as
\begin{equation}
\small
    \mathcal{L}_{KL} = \mathbb{E}_{z \sim \mathcal{N}(0, I)} \left[ \mathrm{KL} \left( h(\hat{w}; g(\theta_{k}; z)) \parallel h(w_{k}; g(\theta_{k}; z)) \right) \right],
\end{equation}
where $\mathrm{KL}$ stands for Kullback–Leibler divergence \cite{kullback1951information}.
When minimizing $\mathcal{L}_{KL}$, the local model is moving closer to the teacher model (i.e., learning the global knowledge).

\noindent
\textbf{Cross Entropy Loss Function.}
We define the loss function over the local dataset $\mathcal{D}_{k}$ as
\begin{equation}
    \mathcal{L}_{CE} = \mathbb{E}_{(x, y) \sim \mathcal{D}_{k}} \left[ \mathrm{CE} \left( h(w_{k}; x), y \right) \right].
\end{equation}
When minimizing $\mathcal{L}_{CE}$, the local model is learning the local knowledge (embedded in the local dataset).

\noindent
\textbf{Total Loss Function.}
The total loss function for global-local knowledge fusion is given as
\begin{equation}
    \mathcal{L} = \mathcal{L}_{CE} + \gamma \mathcal{L}_{KL},
\end{equation}
where $\gamma$ is a hyperparameter for balancing the two loss functions.
When minimizing  $\mathcal{L}$, the global-local knowledge is fused to the local model.

\begin{algorithm}[h]
\caption{FedKF Training.}\label{alg::train}
\SetKwInput{KwInput}{Input}                
\SetKwInput{KwOutput}{Output}              
\DontPrintSemicolon
\KwInput{$K$, $C$, $T$, $E$, $B$, $\beta$, $\eta$, $\theta$, $w^{0}$.}
\KwOutput{$w^{T}$, $\hat{w}^{T}$.}

\textbf{Server executes:}\;
{
    $\hat{w}^{0} \gets w^{0}$, $\hat{w}_{k} \gets w^{0}$, $\theta_{k} \gets \theta$ ($k=1,\ldots, K$).\;
    \For{round $t = 1, 2, \ldots, T$}{
        $\mathcal{S}_{t} \gets$ (random set of $m = C \cdot K$ active clients).\;
        \For{each client $k \in \mathcal{S}_{t}$ \textbf{in parallel}}{
            $w_{k}^{t} \gets$ \textbf{ClientUpdate}$(k, w^{t-1}, \hat{w}^{t-1})$.\;
            $\hat{w}_{k} \gets w_{k}^{t}$. // Update the model in the $k$-th cache slot.\;
        }
        $w^{t} \gets \frac{1}{\sum_{k \in \mathcal{S}_{t}} |\mathcal{D}_{k}| }\sum_{k \in \mathcal{S}_{t}} |\mathcal{D}_{k}| w_{k}^{t}$.\;
        $\hat{w}^{t} \gets \frac{1}{\sum_{k=1}^{K} |\mathcal{D}_{k}|}\sum_{k=1}^{K} |\mathcal{D}_{k}| \hat{w}_{k}$.\;
    }
}
\;
\textbf{ClientUpdate {$(k, w, \hat{w})$}:}\;
{
    $w_{k} \gets w$.\;
    $\mathcal{B} \gets$ (split $\mathcal{D}_{k}$ into batches of size $B$).\;
    \For{each local epoch $i = 1, 2, \ldots, E$}{
        \For{each batch $b \in \mathcal{B}$} {
            $\theta_{k} \gets \theta_{k} - \beta \cdot \nabla \mathcal{L}_{G}(\theta_{k})$.  // Update generator via minimizing $\mathcal{L}_{G}$.\;
            $w_{k} \gets w_{k} - \eta \cdot \nabla \mathcal{L}(w_{k})$. // Update local model via minimizing $\mathcal{L}$.\;
        }
    }
    \KwRet $w_{k}$.\;
}
\end{algorithm}

\subsection{FedKF Training Algorithm}
\label{sub:train1}

The detailed FedKF training algorithm is shown in Algorithm \ref{alg::train}.
The FedKF training algorithm requires $K$ (the number of all clients), $C$ (selection rate), $T$ (the number of communication rounds), $E$ (the number of local training epochs), $B$ (local batch size), $\beta$ (the learning rate for training local generator), $\eta$ (the learning rate for training local model), $\theta$ (the initial generator weight) and $w^{0}$ (the initial ACA model weight) as inputs and returns $w^{T}$ (the final ACA model) and $\hat{w}^{T}$ (the final OCA model) as outputs.
In line 2, the server initializes the OCA model $\hat{w}^{0}$ and all the models $\{ \hat{w}_{k} \}_{k=1}^{K}$ in the cache slots with $w^{0}$ and all clients initialize their local generators $\{ \theta_{k} \}_{k=1}^{K}$ with $\theta$.
In line 4, the server uniformly selects $m$ clients as active ones with $m = C \cdot K$ at random.
In line 6, each active client executes \textbf{ClientUpdate} and uploads the latest local model to the server.
In line 7, the server replaces models in the cache slots with the latest models uploaded from active clients in the current round.
In line 9, the server aggregates all the latest models uploaded from active clients in the current round and gets the updated ACA model.
In line 10, the server aggregates all models in the cache slots and gets the updated OCA model.

\section{FedKF Analysis}
\label{analysis}

In this section, we analyze FedKF from four aspects: why high AMP, why high fairness, why privacy-preserving, and its relationship with agnostic FL.

\subsection{Why High Average Model Performance}

There are two techniques contributing to the high AMP of FedKF.

\noindent
\textbf{T1 Helps to Improve AMP.} 
On the server side, most previous solutions use the active clients aggregated (ACA) model as the global model that is used for inference.
In contrast, our solution FedKF (OCA) uses the overall clients aggregated (OCA) model as the global model.
Since the ACA model only aggregates a small portion of clients, it may lead to a large AMP degradation when clients' local datasets are non-IID.
In contrast, when T1 is used to generate the global model on the server side, it aggregates a model that contains more precise global knowledge learned during FL training.
Thus, T1 can significantly increase the AMP of FedKF.

\noindent
\textbf{T2 Helps to improve AMP.} 
In FedAvg, the AMP degradation is caused by the client model drift issue when training on heterogeneous data.
To improve AMP, FedKF uses T2 to address the client model drift issue.
On the client side, when performing the local training, each client learns the global knowledge simultaneously. 
T2 can regularize the local training by jointly considering both the global and local knowledge. 
It can avoid the local model overfitting towards the local dataset.
Thus, the client model drift issue is alleviated, and the AMP of FedKF is boosted.

\subsection{Why High Fairness}

There are two techniques contributing to the high fairness of FedKF.

\noindent
\textbf{T1 Helps to Improve Fairness.} 
On the server side, most previous solutions use the ACA model as the global model, while our solution FedKF (OCA) uses the OCA model as the global model.
Because the ACA model only aggregates a small portion of clients, it may generate a model that is biased towards only the active clients. 
Thus, the ACA model has poor fairness when data is heterogeneous.
On the contrary, if T1 is used to generate the global model, both the inactive and the active clients are taken into consideration, leading to a fairer model.

\noindent
\textbf{T2 Helps to Improve Fairness.}
In FedAvg, the local model is trained only on the local dataset, so the local model could be overfitted on the local dataset. 
It leads to different degrees of overfitting on different clients when their local datasets are non-IID. 
Hence, the AMP variance could be very large, and the model fairness could be low. 
In contrast, T2 can be used to avoid the local model overfitting towards the local dataset since both global knowledge (embedded in the teacher model) and local knowledge (embedded in the local dataset) are fused into the local model. 
Therefore, T2 can help FedKF to achieve higher model fairness.

\subsection{Why Privacy-Preserving}
In each FedKF training round, there are two information flows exchanged between the server and each client. 
First, the server needs to send two models (i.e., the ACA and OCA models) to each client.
Second, each client needs to send the updated local model to the server after the global-local knowledge fusion. 
Hence, no information about the local data is shared directly.
According to Definition \ref{def:privacy}, FedKF is privacy-preserving.

\subsection{Relationship with Agnostic FL}
\label{app:AFL}

The traditional FL is to optimize the model on the global distribution. 
In practice, the target distribution can be very different from the global distribution.
To improve the applicability of FL, agnostic federated learning (AFL) is proposed \cite{mohri2019agnostic}.
AFL aims to optimize the model performance on any possible target distribution formed by a mixture of client distributions.
In other words, AFL has better domain generalization capability. 
Therefore, it captures more use cases and significantly expands the applicability of FL. 

The mathematical description of AFL is presented as follows. 
Let $Dis_{k}$ denote the local data distribution of $k$-th client.
The global distribution $U$ is denoted as $U= \sum_{k=1}^{K} \frac{n_{k}}{n} Dis_{k}$, where $n_{k}$ represents the number of $k$-th client's local samples and $n = \sum_{k=1}^{K}n_{k}$.
In AFL, the target distribution $\widehat{U}$ can be modeled as an unknown mixture of the distributions $\left \{ Dis_{1}, Dis_{2}, \ldots, Dis_{K} \right \}$.
That is, $\widehat{U} = \sum_{k=1}^{K} \widehat{p}_{k} Dis_{k}$, where $\widehat{p}_{k} \ge 0$ and $\sum_{k=1}^{K} \widehat{p}_{k}=1$.
AFL aims to optimize the model performance on $\widehat{U}$ for any possible choices of $\widehat{p}_{k}$ ($k = 1, \ldots, K$).

For a model trained by AFL, it can be used for many different agnostic target domains. 
Each agnostic target domain represents a distinct use case.
A good model in AFL is expected to have high AMP and high fairness across these multiple use cases in reality. 
Thus, a good AFL model should be able to achieve both high AMP and high fairness in heterogeneous AFL. 
In the following, we theoretically prove that a model trained by FedKF can directly have both high AMP and high fairness in heterogeneous AFL.

\newtheorem{lmm}{\textbf{Lemma}}
\begin{lmm}\label{lmm1}
We denote by $w$ a trained model via using FedKF.
In heterogeneous FL with FedKF, let $\Omega = \{Dis_{1}, \ldots, Dis_{K}\}$ represent a set of the client distributions.
$WLP_w^{\Omega}$ denotes the worst-case local performance of $w$ on $\Omega$.
Suppose that $w$ is used for an arbitrary agnostic domain $\widehat{U}$, let $MP_w^{\widehat{U}}$ be the model performance on the agnostic domain $\widehat{U}$.
It holds that 
\begin{equation}
\label{eq-43-1}
MP_w^{\widehat{U}}\geq WLP_w^{\Omega}.
\end{equation}
\end{lmm}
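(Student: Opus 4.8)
The plan is to exploit the elementary fact that classification accuracy is the expectation of a $0/1$ correctness indicator and is therefore \emph{linear} in the underlying data distribution. This linearity turns the agnostic target into a convex combination of the per-client accuracies, and a convex combination of real numbers is never smaller than its least term. Notice that the statement does not actually use any structural property of the \name-trained $\mathbf{w}$: it holds for any fixed model, so I would treat $\mathbf{w}$ as an arbitrary but fixed classifier throughout.

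First I would write the model performance on any distribution $D$ as $MP_\mathbf{w}^{D} = \mathbb{E}_{(\mathbf{x},y)\sim D}\!\left[\mathbf{1}\{\mathcal{N}(\mathbf{w};\mathbf{x})\text{ predicts }y\}\right]$, so that $a_\mathbf{w}^{k} = MP_\mathbf{w}^{Dis_{k}}$ recovers exactly the per-client accuracy used in Definitions~\ref{def21}--\ref{def23}. Next, since the agnostic domain is the mixture $\widehat{U} = \sum_{k=1}^{K}\widehat{p}_{k}\,Dis_{k}$ with $\widehat{p}_{k}\ge 0$ and $\sum_{k=1}^{K}\widehat{p}_{k}=1$, sampling from $\widehat{U}$ is equivalent to first drawing an index $k$ with probability $\widehat{p}_{k}$ and then drawing from $Dis_{k}$; linearity of expectation then yields $MP_\mathbf{w}^{\widehat{U}} = \sum_{k=1}^{K}\widehat{p}_{k}\,a_\mathbf{w}^{k}$. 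Finally, I would lower-bound every term by the worst case $WLP_\mathbf{w}^{\Omega}=\min_{j}a_\mathbf{w}^{j}$ and pull the constant out of the normalized weights, giving $MP_\mathbf{w}^{\widehat{U}} \ge \sum_{k=1}^{K}\widehat{p}_{k}\,WLP_\mathbf{w}^{\Omega} = WLP_\mathbf{w}^{\Omega}$, which is precisely inequality~(\ref{eq-43-1}).

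The only real subtlety, and the step I expect to require the most care, is justifying the mixture decomposition: I must make precise that the quantity called ``test accuracy'' is genuinely an expectation under the data distribution, so that the two-stage sampling description of $\widehat{U}$ is valid and the per-component accuracies $a_\mathbf{w}^{k}$ factor out cleanly. Once that identification is pinned down, the inequality is immediate, and — crucially for the agnostic-FL interpretation — it holds \emph{uniformly over all} admissible weight vectors $(\widehat{p}_{1},\dots,\widehat{p}_{K})$, since the bound never references the particular $\widehat{p}_{k}$ beyond the fact that they form a probability vector. Everything else reduces to the observation that a weighted average cannot fall below its smallest entry.
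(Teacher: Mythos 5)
Your proposal is correct and follows essentially the same route as the paper's proof: write $MP_\mathbf{w}^{\widehat{U}} = \sum_{k=1}^{K}\widehat{p}_{k}\,a_\mathbf{w}^{k}$, bound each $a_\mathbf{w}^{k}$ below by $WLP_\mathbf{w}^{\Omega}$, and use $\sum_{k}\widehat{p}_{k}=1$. The only difference is that you explicitly justify the mixture identity via linearity of expectation and two-stage sampling, a step the paper simply asserts as Eq.~(\ref{eq-43-2}); this is a welcome tightening rather than a different argument.
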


\begin{proof}
Let $a_{w}^{k}$ represent the test accuracy on distribution $Dis_{k}$ ($k = 1, \ldots, K$).
Then, $WLP_w^{\Omega}$ is given by
\begin{equation}
\label{eq-43-1.5}
WLP_w^{\Omega} = \min \{ a_{w}^{1}, \ldots, a_{w}^{K}\}.
\end{equation}
For $MP_w^{\widehat{U}}$, we have 
\begin{equation}
\label{eq-43-2}
MP_w^{\widehat{U}} = \sum_{k=1}^{K} \widehat{p}_{k} a_{w}^{k}.
\end{equation}
According to Eq. (\ref{eq-43-1.5}), $WLP_w^{\Omega}$ is the lower bound for  $a_{w}^{k}$ ($k = 1, \ldots, K$). 
Thus, substituting $WLP_w^{\Omega}$ for $a_{w}^{k}$ ($k = 1, \ldots, K$) in Eq. (\ref{eq-43-2}), it holds that
$$
MP_w^{\widehat{U}} \ge \sum_{k=1}^{K} \widehat{p}_{k} WLP_w^{\Omega} = WLP_w^{\Omega} \sum_{k=1}^{K} \widehat{p}_{k} = WLP_w^{\Omega}.
$$
\end{proof}

\newtheorem{thm}{\bf Theorem}[section]
\begin{thm}\label{thm1}
Suppose that there are multiple (e.g., $Q$) arbitrary agnostic target domains $\widehat{U}_i$ ($i=1,\ldots, Q$).
Let $WLP_w^{\widehat{\Omega}}$ be the worst-case performance on these agnostic target domains, where $\widehat{\Omega} = \{\widehat{U}_1, \ldots, \widehat{U}_Q\}$.
It holds that 
\begin{equation}
\label{eq-43-3}
WLP_w^{\widehat{\Omega}} \geq WLP_w^{\Omega},
\end{equation}
\end{thm}

\begin{proof}
According to Lemma \ref{lmm1}, it holds that 
$MP_w^{\widehat{U}_i} \geq WLP_w^{\Omega}$
for any $i \in [Q]$.
Since $WLP_w^{\widehat{\Omega}} = \min \{ MP_w^{\widehat{U}_i}, \ldots, MP_w^{\widehat{U}_M} \}$, we have
$
WLP_w^{\widehat{\Omega}} \geq WLP_w^{\Omega}.
$

\end{proof}

According to Theorem \ref{thm1}, the WLP of a model trained by FedKF in heterogeneous FL is the lower bound of the WLP when the model is used for heterogeneous AFL. 
Given the fact that the WLP metric tends to measure the joint performance of AMP and FM, FedKF directly turns out to be a good solution to achieve high AMP and high fairness simultaneously in heterogeneous AFL. 
Since AFL has more use cases, FedKF can also be applied to a broader range of use cases. 
Thus, FedKF has much broader impacts in reality.
\begin{figure*}[t]
\centering
\subfigure[$\alpha=1$]{
    \centering
    \includegraphics[width=0.31\linewidth]{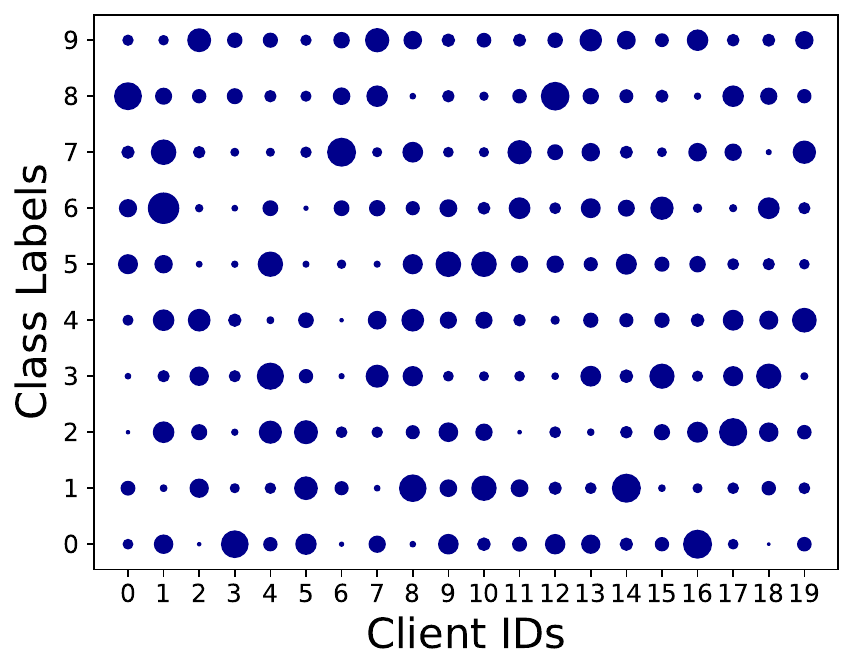}
}
\subfigure[$\alpha=0.1$]{
    \centering
    \includegraphics[width=0.31\linewidth]{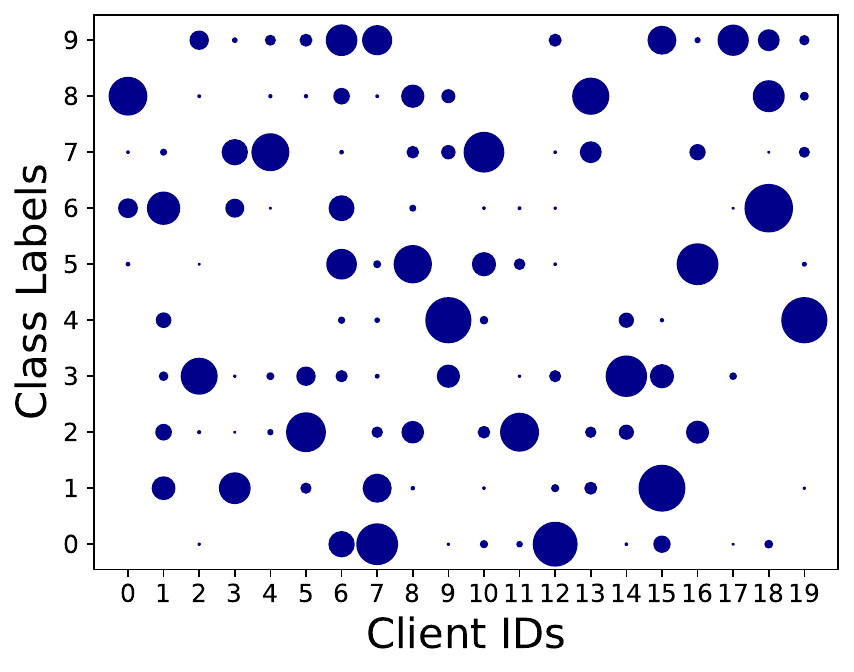}
}
\subfigure[$\alpha=0.01$]{
    \centering
    \includegraphics[width=0.31\linewidth]{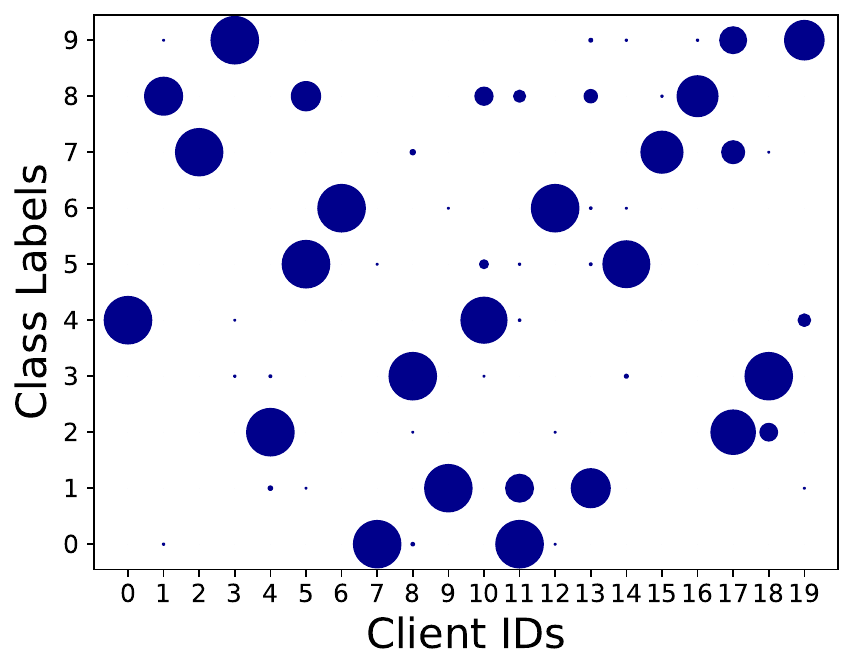}
}
\caption{Visualization of statistical heterogeneity among clients on CIFAR-10 dataset with different $\alpha$. The size of scattered points is proportional to the number of training samples for a label available on the client.}
\label{Fig::DistriCIFAR10}
\end{figure*}

\section{Experiments}
\label{experiments}

This section first introduces the experiment setup.
Then, the experimental results are reported.

\subsection{Experiment Setup}

\noindent
\textbf{FedKF Variants.}
When FedKF training is finished, either the ACA model or the
OCA model can serve as the final model to be used.
Depending on which model is used as the final model, FedKF has two variants: FedKF (ACA) and FedKF (OCA).
Besides, during the local training in FedKF, the OCA model serves as the teacher model, and the ACA model serves as the local model, so in each communication round, the amount of data in downlink communication (from server to client) is doubled compared with FedAvg.
If we use the ACA model to serve as both the teacher model and the local model, then the amount of data in downlink communication does not increase in each communication round.
This FedKF variant is denoted as FedKF-.
We call it the communication-efficient variant in this paper.
Note that in FedKF-, the student model is trained based on both the KL loss and the CE loss, so the student model (starting from the ACA model) is evolving along with the local training process. In contrast, the teacher model (always the ACA model) is fixed during the local training.
To sum up, we have four FedKF variants: FedKF (ACA), FedKF (OCA), FedKF- (ACA), and FedKF- (OCA).

\noindent
\textbf{Solutions in Comparison.}
We compare four FedKF variants with previous FL algorithms, including FedAvg \cite{mcmahan2017communication}, FedProx \cite{li2020federated}, FedGen \cite{zhu2021data}, FedGKD \cite{yao2021local}, and q-FFL \cite{li2019fair}.
For q-FFL, we use FedAvg as its optimization method, and it is also called q-FedAvg in \cite{li2019fair}.

\noindent
\textbf{Datasets.}
We conduct experiments on three datasets, including EMNIST \cite{cohen2017emnist}, CIFAR-10, and CIFAR-100 \cite{krizhevsky2009learning}.
For EMNIST, we only use a subset of the dataset by randomly sampling $10\%$ from each class.
Each client's local dataset is split into $80\%$ training set and $20\%$ testing set randomly.
Following previous works \cite{lin2020ensemble, zhu2021data, li2021model, yao2021local}, we use Dirichlet distribution to model heterogeneous data.
The Dirichlet distribution $\mathbf{Dir}_{K}(\alpha)$ has a adjustable concentration parameter $\alpha$.
A smaller $\alpha$ implies a higher data heterogeneity across different clients.
For example, the statistical heterogeneity among clients on CIFAR-10 with different concentration parameters $\alpha$ is shown in Fig. \ref{Fig::DistriCIFAR10}.

\noindent
\textbf{Implementation \& Training Details.}
The proposed FedKF and solutions in comparison are all implemented in PyTorch \cite{NEURIPS2019_9015} and evaluated on a Linux server with two TITAN RTX GPUs.
Since the learning process is exactly the same, the performance metrics measured are accurate in our experiments.

For the shared global model, two different neural network models are used.
ResNet-8 \cite{he2016deep} is used for CIFAR-10/100 and LeNet-5 \cite{lecun1989backpropagation} is used for EMNIST.
Note that Batch Normalization (BN) fails on heterogeneous training data due to the statistics of running mean and variance for the clients' data \cite{hsieh2020non}; we replace BN with Group Normalization (GN) to produce stabler results and set the number of channels of each group as 1.
For the generator used in data-free KD, we use a deep convolutional generator used in \cite{radford2015unsupervised} and replace $\mathrm{tanh}$ activation function in the last layer by $\mathrm{sigmoid}$.

The optimizers used in training local generator and local model are Adam and SGD, respectively.
For local generator training, the learning rate is set to 0.001.
For local model training, the learning rate is set to 0.01 and 0.1 for LeNet-5 and ResNet-8, respectively.
For FL learning, we run a total of 100 communication rounds.
We set the number of all clients $K$ to 20 and the selection rate $C$ to 20\%, which means there are 4 clients selected as active ones in each round.
The number of local update epochs $E$ is set to 10, and local batch size $B$ is set to 64.
For FedKF, we set $\gamma$, $\lambda_{1}$, and $\lambda_{2}$ to 1, 0.1, and 0.1, respectively, for all the datasets.

To compare with FedProx, we tune FedProx's parameter $\mu$ from \{0.00001, 0.0001, 0.001, 0.01, 0.1, 1\} and report the best result.
For FedProx, the best $\mu$ for CIFAR-10, CIFAR-100, and EMNIST are 0.001, 0.0001, and 0.001, respectively.
To compare with FedGKD, following \cite{yao2021local}, we set the default buffer size as 5.
For FedGKD, we tune FedGKD's parameter $\gamma$ from \{0.001, 0.01, 0.1, 0.2, 0.5, 1\}.
The best $\gamma$ for CIFAR-10, CIFAR-100, and EMNIST are 0.2, 0.2, and 0.001, respectively.
To compare with q-FFL, we tune q-FFL's parameter $q$ from \{0.00001, 0.0001, 0.001, 0.01, 0.1, 1\}.
The best $q$ for CIFAR-10, CIFAR-100, and EMNIST are 0.0001, 0.0001, and 0.0001, respectively.

\begin{table*}[t]
\centering
\caption{Performance of different solutions on EMNIST.}
\label{tab:EMNIST}
\resizebox{2\columnwidth}{!}{
\renewcommand{\arraystretch}{1.4}
\begin{tabular}{c|ccc|ccc|ccc}
\hline
\multirow{2}{*}{Solutions} & \multicolumn{3}{c|}{$\alpha = 1$} & \multicolumn{3}{c|}{$\alpha = 0.1$} & \multicolumn{3}{c}{$\alpha = 0.01$} \\
\cline{2-10}
                      & AMP (\%)                        & FM ($\times 10^{-3}$)              & WLP (\%)                         & AMP (\%)                         & FM ($\times 10^{-3}$)             & WLP (\%)                         & AMP (\%)                         & FM ($\times 10^{-2}$)             & WLP (\%) \\
\hline
\hline
FedAvg                & 83.78 $\!\pm\!$ 0.24             & 1.477 $\!\pm\!$ 0.132             & 76.46 $\!\pm\!$ 1.66             & 75.61 $\!\pm\!$ 0.92             & 7.972 $\!\pm\!$ 0.703             & 57.07 $\!\pm\!$ 1.03             & 55.42 $\!\pm\!$ 2.78             & 4.871 $\!\pm\!$ 0.976             & 5.61 $\!\pm\!$  2.86 \\
\hline
FedProx               & 83.87 $\!\pm\!$ 0.18             & 2.008 $\!\pm\!$ 0.167             & 75.24 $\!\pm\!$ 1.32             & 75.59 $\!\pm\!$ 0.99             & 7.844 $\!\pm\!$ 0.610             & 56.75 $\!\pm\!$ 4.09             & 55.70 $\!\pm\!$ 1.82             & 6.535 $\!\pm\!$ 1.324             & 3.96 $\!\pm\!$ 1.96 \\
\hline
FedGen                & 84.81 $\!\pm\!$ 0.36             & 1.379 $\!\pm\!$ 0.206             & 78.25 $\!\pm\!$ 2.29             & 77.37 $\!\pm\!$ 1.62             & 5.405 $\!\pm\!$ 1.559             & 61.20 $\!\pm\!$ 6.59             & 56.07 $\!\pm\!$ 3.94             & 5.951 $\!\pm\!$ 1.194             & 9.08 $\!\pm\!$ 3.92 \\
\hline
FedGKD                & 83.63 $\!\pm\!$ 0.28             & 1.463 $\!\pm\!$ 0.124             & 74.78 $\!\pm\!$ 1.34             & 75.93 $\!\pm\!$ 0.67             & 6.477 $\!\pm\!$ 0.504             & 56.02 $\!\pm\!$ 2.44             & 57.86 $\!\pm\!$ 1.48             & 4.266 $\!\pm\!$ 0.846             & 7.92 $\!\pm\!$ 2.32 \\
\hline
q-FFL                 & 84.07 $\!\pm\!$ 0.34             & 1.729 $\!\pm\!$ 0.182             & 76.58 $\!\pm\!$ 1.78             & 75.67 $\!\pm\!$ 0.02             & 6.740 $\!\pm\!$ 1.524             & 57.25 $\!\pm\!$ 2.30             & 54.90 $\!\pm\!$ 2.36             & 3.128 $\!\pm\!$ 0.784             & 18.70 $\!\pm\!$ 4.36 \\
\hline
\textbf{FedKF-} (ACA) & 85.18 $\!\pm\!$ 0.38             & \textbf{1.064 $\!\pm\!$ 0.109}    & \textbf{78.38 $\!\pm\!$ 0.90}    & 82.82 $\!\pm\!$ 0.50             & \underline{2.753 $\!\pm\!$ 0.503} & \underline{72.71 $\!\pm\!$ 1.09} & \underline{74.44 $\!\pm\!$ 0.95} & \underline{1.667 $\!\pm\!$ 0.137} & \underline{39.92 $\!\pm\!$ 2.27} \\
\hline
\textbf{FedKF-} (OCA) & 85.26 $\!\pm\!$ 0.42             & \underline{1.211 $\!\pm\!$ 0.217} & \underline{78.35 $\!\pm\!$ 0.95} & \textbf{83.36 $\!\pm\!$ 0.62}    & \textbf{2.358 $\!\pm\!$ 0.287}    & \textbf{73.76 $\!\pm\!$ 0.80}    & \textbf{76.20 $\!\pm\!$ 0.78}    & \textbf{1.283 $\!\pm\!$ 0.157}    & \textbf{41.91 $\!\pm\!$ 4.68} \\
\hline
\textbf{FedKF } (ACA) & \textbf{85.54 $\!\pm\!$ 0.37}    & 1.717 $\!\pm\!$ 0.251             & 77.39 $\!\pm\!$ 1.02             & 82.76 $\!\pm\!$ 0.16             & 3.049 $\!\pm\!$ 0.848             & 72.41 $\!\pm\!$ 0.12             & 72.33 $\!\pm\!$ 1.12             & 2.153 $\!\pm\!$ 0.478             & 38.12 $\!\pm\!$ 3.42 \\
\hline
\textbf{FedKF } (OCA) & \underline{85.46 $\!\pm\!$ 0.29} & 1.540 $\!\pm\!$ 0.192             & 78.26 $\!\pm\!$ 0.88             & \underline{82.94 $\!\pm\!$ 0.12} & 2.978 $\!\pm\!$ 0.383             & 71.53 $\!\pm\!$ 1.57             & 72.73 $\!\pm\!$ 1.09             & 2.780 $\!\pm\!$ 0.574             & 36.73 $\!\pm\!$ 3.86 \\
\hline
\end{tabular}}
\end{table*}

\begin{table*}[t]
\centering
\caption{Performance of different solutions on CIFAR-10.}
\label{tab:CIFAR10}
\resizebox{2\columnwidth}{!}{
\renewcommand{\arraystretch}{1.4}
\begin{tabular}{c|ccc|ccc|ccc}
\hline
\multirow{2}{*}{Solutions} & \multicolumn{3}{c|}{$\alpha = 1$} & \multicolumn{3}{c|}{$\alpha = 0.1$} & \multicolumn{3}{c}{$\alpha = 0.01$} \\
\cline{2-10}
                       & AMP (\%)                         & FM ($\times 10^{-3}$)             & WLP (\%)                         & AMP (\%)                         & FM ($\times 10^{-2}$)             & WLP (\%)                         & AMP (\%)                         & FM ($\times 10^{-2}$)             & WLP (\%) \\
\hline
\hline
FedAvg                 & 74.28 $\!\pm\!$ 0.28             & 1.014 $\!\pm\!$ 0.148             & 68.23 $\!\pm\!$ 0.43             & 61.89 $\!\pm\!$ 0.81             & 1.807 $\!\pm\!$ 0.240             & 34.41 $\!\pm\!$ 7.74             & 38.48 $\!\pm\!$ 0.43             & 7.914 $\!\pm\!$ 0.612             & 1.82 $\!\pm\!$ 1.21 \\
\hline
FedProx                & 74.00 $\!\pm\!$ 0.16             & 1.079 $\!\pm\!$ 0.126             & 67.87 $\!\pm\!$ 0.23             & 62.25 $\!\pm\!$ 0.77             & 1.764 $\!\pm\!$ 0.365             & 36.09 $\!\pm\!$ 10.66            & 38.30 $\!\pm\!$ 0.64             & 7.979 $\!\pm\!$ 0.742             & 1.24 $\!\pm\!$ 0.63 \\
\hline
FedGen                 & 74.52 $\!\pm\!$ 0.22             & 0.778 $\!\pm\!$ 0.110             & 68.56 $\!\pm\!$ 0.06             & 62.92 $\!\pm\!$ 0.75             & 1.916 $\!\pm\!$ 0.342             & 34.00 $\!\pm\!$ 6.26             & 40.34 $\!\pm\!$ 0.51             & 6.673 $\!\pm\!$ 0.689             & 2.63 $\!\pm\!$ 2.41 \\
\hline
FedGKD                 & 74.83 $\!\pm\!$ 0.18             & 0.786 $\!\pm\!$ 0.096             & 69.40 $\!\pm\!$ 0.38             & 63.98 $\!\pm\!$ 0.38             & 1.508 $\!\pm\!$ 0.682             & 40.04 $\!\pm\!$ 5.72             & 39.14 $\!\pm\!$ 0.72             & 5.478 $\!\pm\!$ 0.574             & 2.40 $\!\pm\!$ 1.76 \\
\hline
q-FFL                  & 73.96 $\!\pm\!$ 0.31             & \textbf{0.716 $\!\pm\!$ 0.102}    & 68.59 $\!\pm\!$ 0.19             & 61.49 $\!\pm\!$ 1.26             & 1.853 $\!\pm\!$ 0.451             & 33.02 $\!\pm\!$ 8.20             & 37.75 $\!\pm\!$ 0.58             & 3.655 $\!\pm\!$ 0.484             & 1.26 $\!\pm\!$ 0.78 \\
\hline
\textbf{FedKF -} (ACA) & 75.19 $\!\pm\!$ 0.07             & \underline{0.737 $\!\pm\!$ 0.158} & \textbf{70.14 $\!\pm\!$ 0.49}    & 67.97 $\!\pm\!$ 0.79             & 1.333 $\!\pm\!$ 0.075             & 47.72 $\!\pm\!$ 3.30             & 47.98 $\!\pm\!$ 0.97             & 5.576 $\!\pm\!$ 0.347             & 3.96 $\!\pm\!$ 3.67 \\
\hline
\textbf{FedKF -} (OCA) & \textbf{75.62 $\!\pm\!$ 0.13}    & 1.150 $\!\pm\!$ 0.138             & 69.13 $\!\pm\!$ 0.59             & \underline{69.88 $\!\pm\!$ 0.29} & \underline{0.874 $\!\pm\!$ 0.187} & \underline{53.89 $\!\pm\!$ 2.54} & \underline{54.41 $\!\pm\!$ 0.54} & \underline{3.063 $\!\pm\!$ 0.191} & \underline{26.09 $\!\pm\!$ 4.84} \\
\hline
\textbf{FedKF} (ACA)   & 75.23 $\!\pm\!$ 0.17             & 0.971 $\!\pm\!$ 0.117             & \underline{69.68 $\!\pm\!$ 0.42} & 67.86 $\!\pm\!$ 0.39             & 1.271 $\!\pm\!$ 0.146             & 45.12 $\!\pm\!$ 2.75             & 47.89 $\!\pm\!$ 0.49             & 5.817 $\!\pm\!$ 0.536             & 5.68 $\!\pm\!$ 1.28 \\
\hline
\textbf{FedKF} (OCA)   & \underline{75.54 $\!\pm\!$ 0.09} & 1.197 $\!\pm\!$ 0.078             & 69.03 $\!\pm\!$ 0.27             & \textbf{70.11 $\!\pm\!$ 0.74}    & \textbf{0.835 $\!\pm\!$ 0.110}    & \textbf{55.18 $\!\pm\!$ 1.32}    & \textbf{54.74 $\!\pm\!$ 0.66}    & \textbf{2.792 $\!\pm\!$ 0.263}    & \textbf{29.86 $\!\pm\!$ 5.51} \\
\hline
\end{tabular}}
\end{table*}

\begin{table*}[t]
\centering
\caption{Performance of different solutions on CIFAR-100.}
\label{tab:CIFAR100}
\resizebox{2\columnwidth}{!}{
\renewcommand{\arraystretch}{1.4}
\begin{tabular}{c|ccc|ccc|ccc}
\hline
\multirow{2}{*}{Solutions} & \multicolumn{3}{c|}{$\alpha = 1$} & \multicolumn{3}{c|}{$\alpha = 0.1$} & \multicolumn{3}{c}{$\alpha = 0.01$} \\
\cline{2-10}
                       & AMP (\%)                         & FM ($\times 10^{-3}$)             & WLP (\%)                         & AMP (\%)                         & FM ($\times 10^{-3}$)             & WLP (\%)                         & AMP (\%)                         & FM ($\times 10^{-2}$)             & WLP (\%) \\
\hline
\hline
FedAvg                 & 36.92 $\!\pm\!$ 0.42             & 1.549 $\!\pm\!$ 0.274             & 29.53 $\!\pm\!$ 0.62             & 29.37 $\!\pm\!$ 0.58             & 6.265 $\!\pm\!$ 1.206             & 17.70 $\!\pm\!$ 0.64             & 17.41 $\!\pm\!$ 0.12             & 1.660 $\!\pm\!$ 0.152             & 3.37 $\!\pm\!$ 0.33 \\
\hline
FedProx                & 36.63 $\!\pm\!$ 0.49             & 0.958 $\!\pm\!$ 0.241             & 30.60 $\!\pm\!$ 0.74             & 29.62 $\!\pm\!$ 0.51             & 6.131 $\!\pm\!$ 0.455             & 17.11 $\!\pm\!$ 1.11             & 17.75 $\!\pm\!$ 0.19             & 1.828 $\!\pm\!$ 0.134             & 3.41 $\!\pm\!$ 0.43 \\
\hline
FedGen                 & 40.23 $\!\pm\!$ 0.38             & 1.154 $\!\pm\!$ 0.222             & 32.21 $\!\pm\!$ 0.80             & 32.08 $\!\pm\!$ 0.51             & 5.378 $\!\pm\!$ 0.938             & 19.37 $\!\pm\!$ 4.49             & 18.06 $\!\pm\!$ 0.05             & 1.727 $\!\pm\!$ 0.122             & 1.93 $\!\pm\!$ 0.39 \\
\hline
FedGKD                 & 38.45 $\!\pm\!$ 0.56             & 1.302 $\!\pm\!$ 0.213             & 32.97 $\!\pm\!$ 0.58             & 32.23 $\!\pm\!$ 0.46             & 3.960 $\!\pm\!$ 1.218             & 22.51 $\!\pm\!$ 1.02             & 15.77 $\!\pm\!$ 0.18             & 1.626 $\!\pm\!$ 0.146             & 1.06 $\!\pm\!$ 0.34 \\
\hline
q-FFL                  & 36.12 $\!\pm\!$ 0.31             & \textbf{0.729 $\!\pm\!$ 0.178}    & 31.26 $\!\pm\!$ 0.67             & 29.43 $\!\pm\!$ 0.96             & 5.781 $\!\pm\!$ 2.946             & 17.59 $\!\pm\!$ 2.74             & 16.82 $\!\pm\!$ 0.22             & 1.757 $\!\pm\!$ 0.118             & 1.06 $\!\pm\!$ 0.19 \\
\hline
\textbf{FedKF -} (ACA) & 39.86 $\!\pm\!$ 0.49             & 0.952 $\!\pm\!$ 0.113             & 33.79 $\!\pm\!$ 0.58             & 33.73 $\!\pm\!$ 0.73             & 4.420 $\!\pm\!$ 1.189             & 24.50 $\!\pm\!$ 1.21             & 20.30 $\!\pm\!$ 0.58             & 1.404 $\!\pm\!$ 0.134             & 3.34 $\!\pm\!$ 0.77 \\
\hline
\textbf{FedKF -} (OCA) & \underline{41.13 $\!\pm\!$ 0.41} & 1.019 $\!\pm\!$ 0.329             & \underline{35.51 $\!\pm\!$ 1.08} & \underline{37.43 $\!\pm\!$ 0.65} & \underline{1.869 $\!\pm\!$ 0.192} & \underline{28.30 $\!\pm\!$ 1.21} & \underline{23.45 $\!\pm\!$ 0.66} & \underline{0.999 $\!\pm\!$ 0.198} & \underline{8.03 $\!\pm\!$ 0.69} \\
\hline
\textbf{FedKF} (ACA)   & 40.53 $\!\pm\!$ 0.37             & \underline{0.906 $\!\pm\!$ 0.152} & 34.16 $\!\pm\!$ 0.72             & 34.11 $\!\pm\!$ 0.63             & 4.273 $\!\pm\!$ 0.709             & 22.30 $\!\pm\!$ 1.31             & 21.32 $\!\pm\!$ 0.78             & 1.900 $\!\pm\!$ 0.055             & 4.50 $\!\pm\!$ 0.16 \\
\hline
\textbf{FedKF} (OCA)   & \textbf{41.30 $\!\pm\!$ 0.43}    & 0.978 $\!\pm\!$ 0.137             & \textbf{36.43 $\!\pm\!$ 0.88}    & \textbf{38.08 $\!\pm\!$ 0.19}    & \textbf{1.775 $\!\pm\!$ 0.109}    & \textbf{29.40 $\!\pm\!$ 0.70}    & \textbf{24.06 $\!\pm\!$ 0.50}    & \textbf{0.876 $\!\pm\!$ 0.089}    & \textbf{8.71 $\!\pm\!$ 0.97} \\
\hline
\end{tabular}}
\end{table*}

\begin{figure*}[t]
\centering
\subfigure[EMNIST]{
    \centering
    \includegraphics[width=0.31\linewidth]{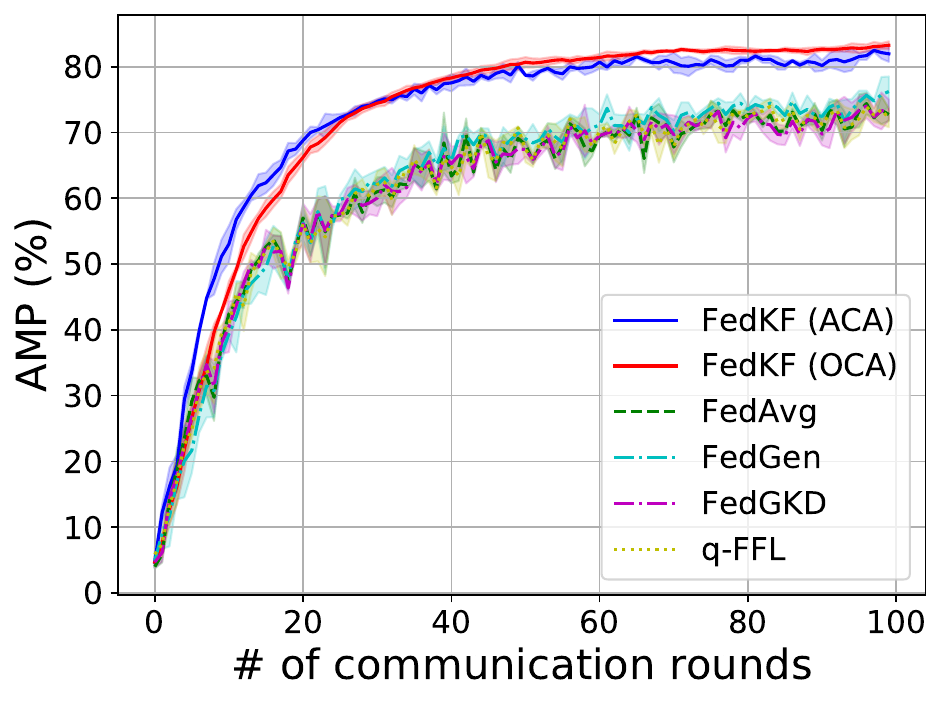}
}
\subfigure[CIFAR-10]{
    \centering
    \includegraphics[width=0.31\linewidth]{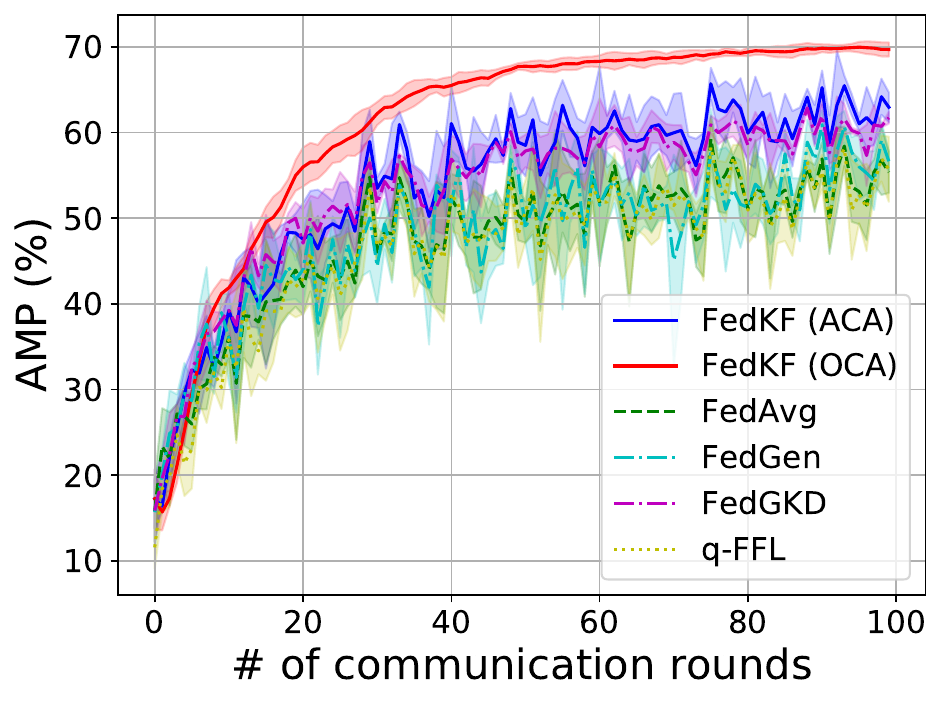}
}
\subfigure[CIFAR-100]{
    \centering
    \includegraphics[width=0.31\linewidth]{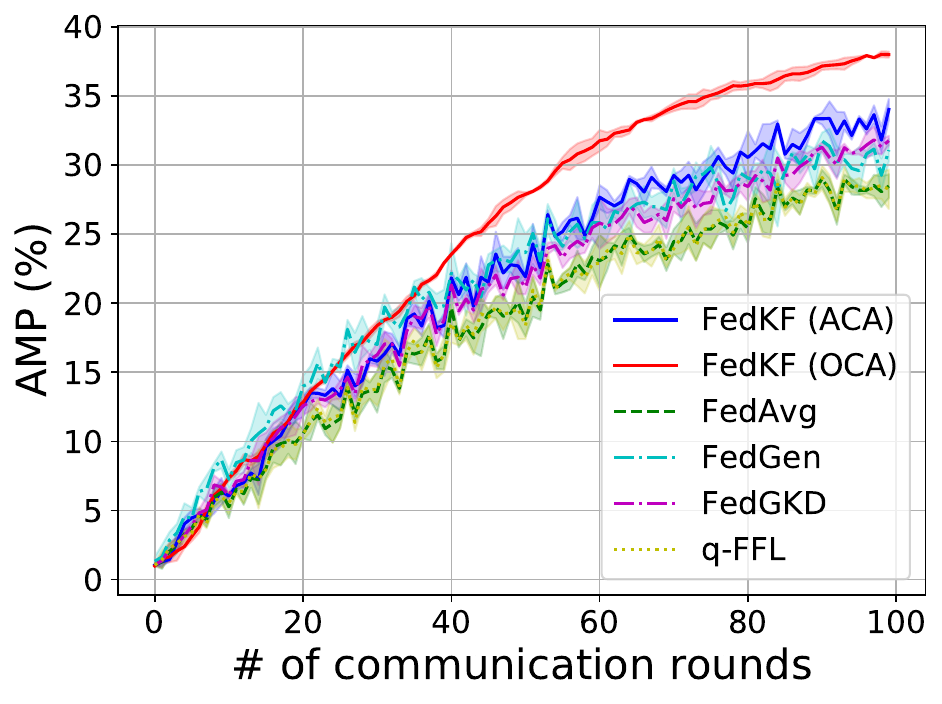}
}

\subfigure[EMNIST]{
    \centering
    \includegraphics[width=0.31\linewidth]{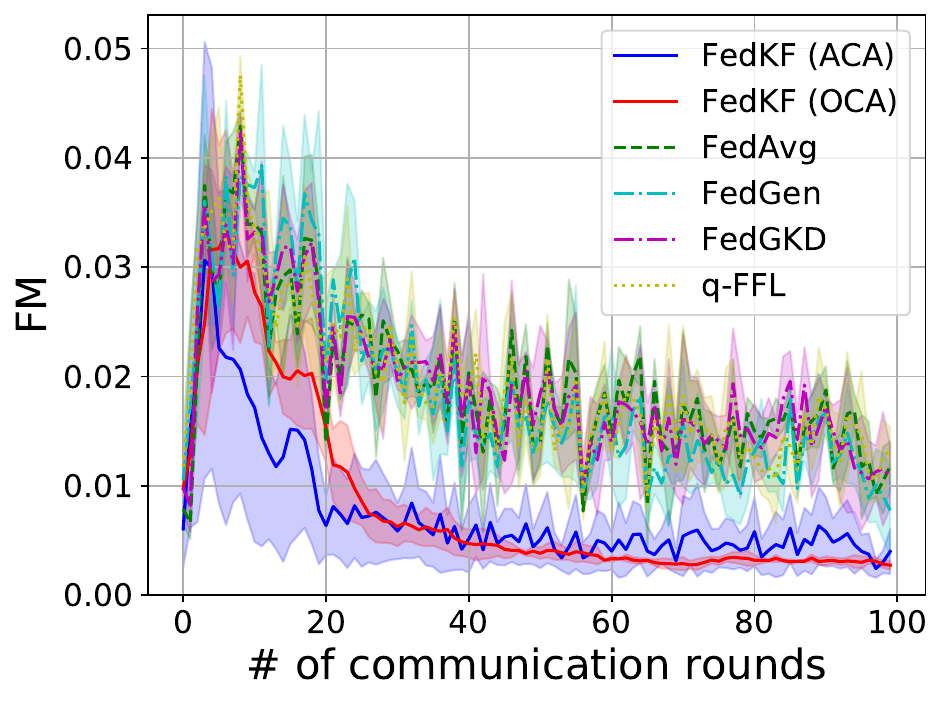}
}
\subfigure[CIFAR-10]{
    \centering
    \includegraphics[width=0.31\linewidth]{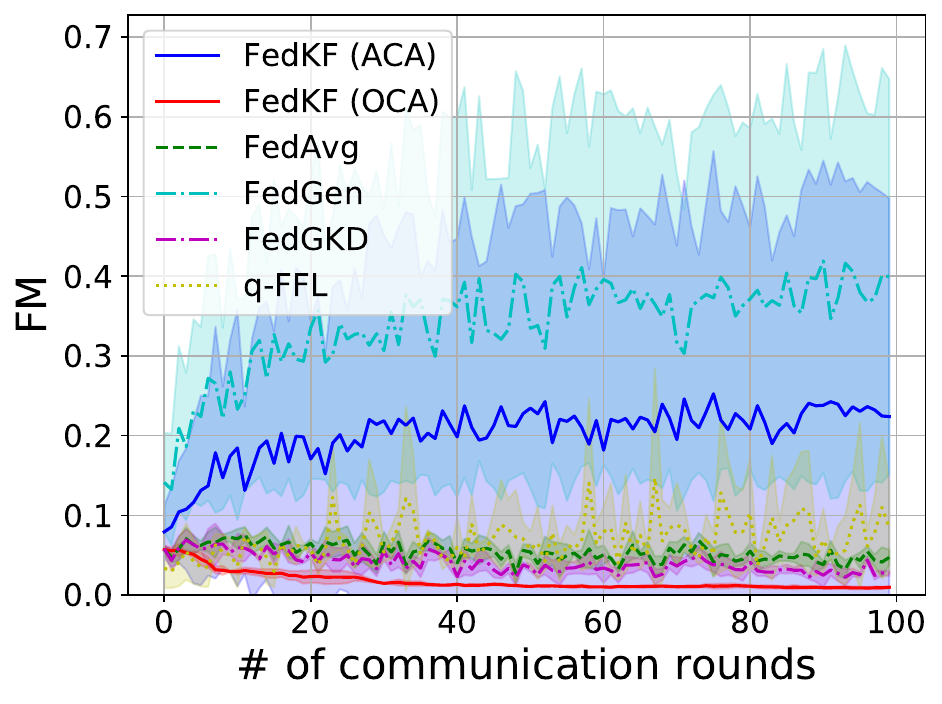}
}
\subfigure[CIFAR-100]{
    \centering
    \includegraphics[width=0.31\linewidth]{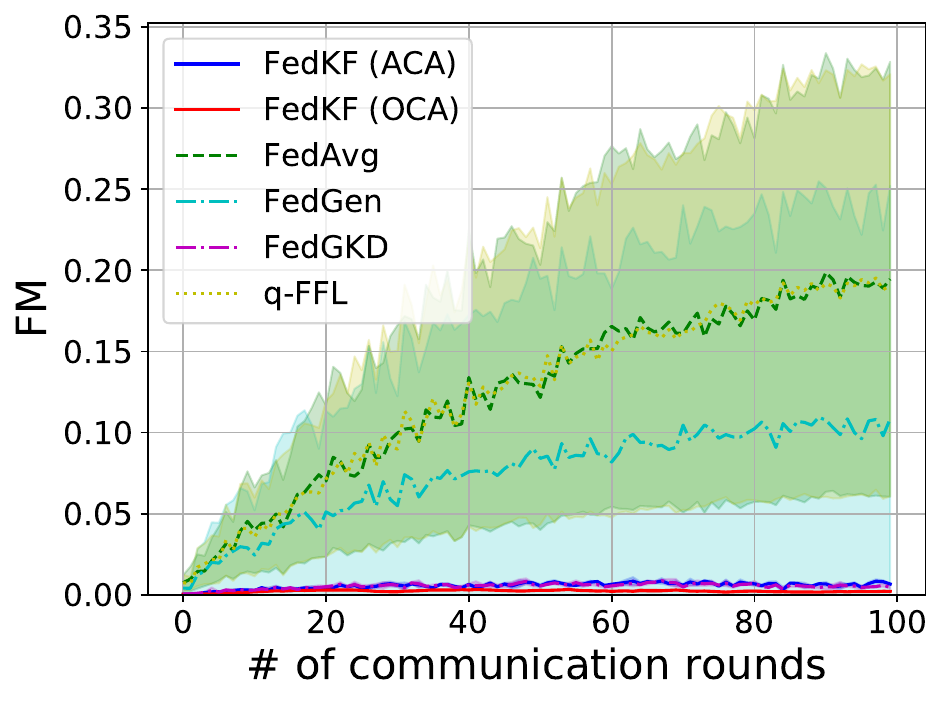}
}
\subfigure[EMNIST]{
    \centering
    \includegraphics[width=0.31\linewidth]{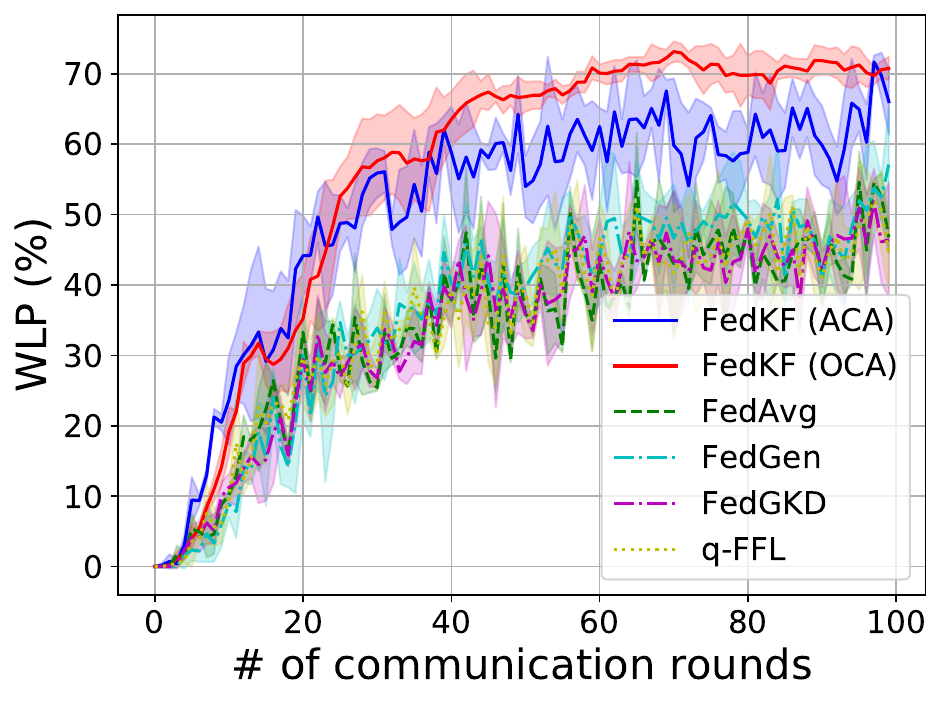}
}
\subfigure[CIFAR-10]{
    \centering
    \includegraphics[width=0.31\linewidth]{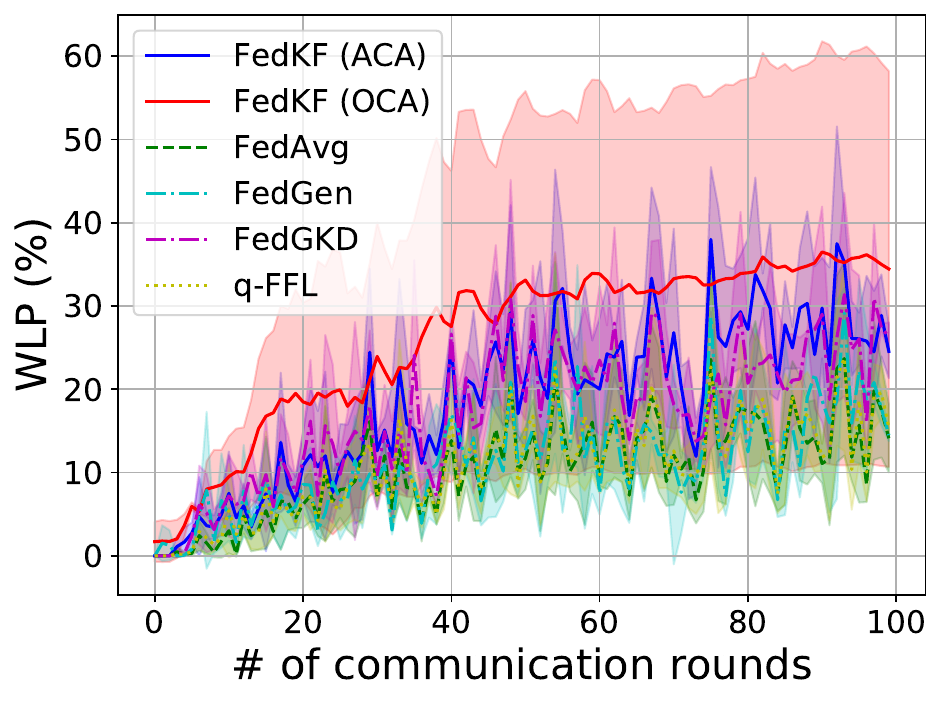}
}
\subfigure[CIFAR-100]{
    \centering
    \includegraphics[width=0.31\linewidth]{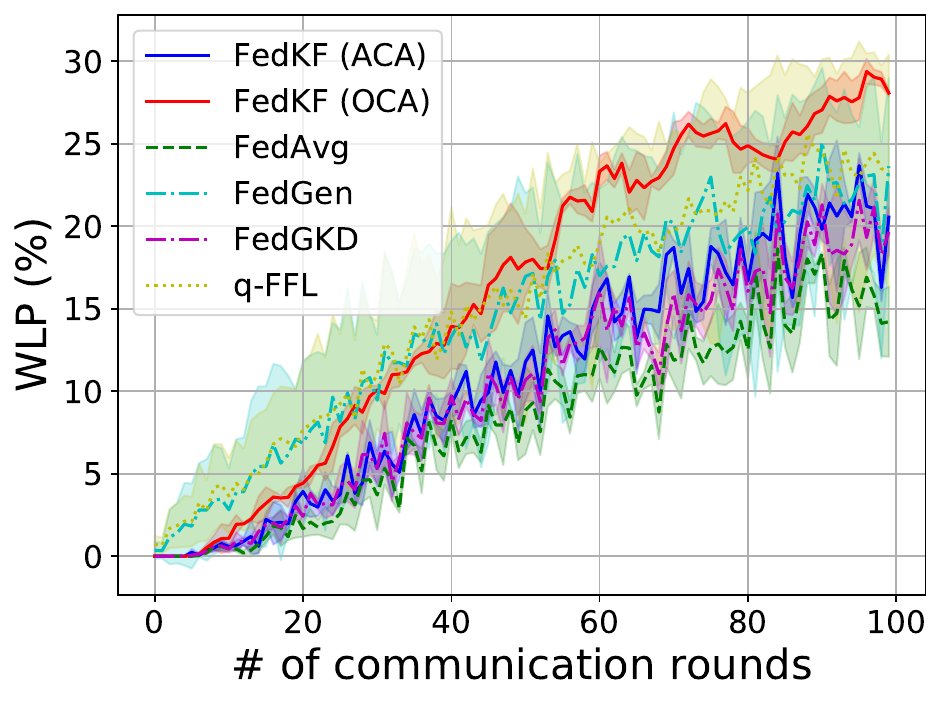}
}
\caption{Performance v.s. number of communication rounds on different datasets with $\alpha = 0.1$.}
\label{Fig::NumberofC}
\end{figure*}

\subsection{Experimental Results}
\label{sect:results}

\noindent
\textbf{Performance Metrics.}
Table \ref{tab:EMNIST}-\ref{tab:CIFAR100} show the performance metrics (i.e., AMP, FM, and WLP) of different solutions for three degrees of data heterogeneity ($\alpha=1, 0.1, 0.01$).
In the tables, for each case, the best result is marked as \textbf{bold} font, and the second best is marked using \underline{underline}.
We have the following six findings.
\begin{itemize}[leftmargin=0.4cm]
\item \textbf{F1:} FedKF-(OCA) performance is the best when $\alpha=0.1$ and 0.01 on EMNIST, while FedKF (OCA) performance is the best when $\alpha=0.1$ and 0.01 on both CIFRA-10 and CIFAR-100 datasets.

\item \textbf{F2:} When $\alpha=1$ on three datasets,
at least one of the four FedKF variants can rank in the top 2 (in terms of AMP, FM, and WLP) among all solutions in comparison.

\item \textbf{F3:} FedKF has better performance than FedGen on all three datasets.
FedGen allows the leakage of local label counts to the server, meaning that the server knows exactly how each local dataset is heterogeneous. Thus, it violates the privacy-preserving property (see Definition \ref{def:privacy}).
FedKF outperforms FedGen while still guaranteeing the privacy-preserving property.

\item \textbf{F4:} Under most cases, the OCA variants have better performance than their corresponding ACA variants.
The superiority of the OCA variants (over the ACA variants) is more obvious with the decrease of $\alpha$ (i.e., higher data heterogeneity).

\item \textbf{F5:} Under most cases, the original FedKF has a better performance than its communication-efficient variants (i.e., FedKF-(ACA) and FedKF-(OCA)).

\item \textbf{F6:} Under most cases,  the superiority of FedKF variants (over other solutions) is more obvious with a decreasing of $\alpha$.
For example, on CIFAR-10, FedKF (OCA)'s WLP is (69.03\%-68.23\%)=0.8\% better than FedAvg when $\alpha=1$, whereas it is (29.86\%-1.82\%)=28.04\% better than FedAvg when $\alpha=0.01$.
Hence, FedKF is especially good at dealing with highly heterogeneous data.

\end{itemize}

\begin{table}[h]
\Huge
\centering
\caption{The number of rounds needed in different solutions to achieve the same AMP as running FedAvg for 100 rounds on different datasets with $\alpha = 0.1$. The speedup of a solution is computed against FedAvg.}
\label{tab:speedup}
\resizebox{1\columnwidth}{!}
{
\renewcommand{\arraystretch}{1.4}
\begin{tabular}{c|cc|cc|cc}
\hline
\multirow{2}{*}{Solutions} & \multicolumn{2}{c|}{EMNIST} & \multicolumn{2}{c|}{CIFAR-10} & \multicolumn{2}{c}{CIFAR-100} \\
\cline{2-7}
                     & \# of rounds & Speedup             & \# of rounds & Speedup             & \# of rounds & Speedup \\
\hline
\hline
FedAvg               & 100          & 1$\times$            & 100          & 1$\times$            & 100          & 1$\times$ \\
\hline
FedProx              & $>$100       & $<$1$\times$         & 98           & 1$\times$            & 96           & 1$\times$ \\
\hline
FedGen               & 77           & 1.3$\times$          & 91           & 1.1$\times$          & 74           & 1.4$\times$ \\
\hline
FedGKD               & $>$100       & $<$1$\times$         & 49           & 2.0$\times$          & 82           & 1.2$\times$ \\
\hline
q-FFL                & 100          & 1$\times$            & $>$100       & $<$1$\times$         & 100          & 1$\times$ \\
\hline
\textbf{FedKF-} (ACA) & \textbf{31} & \textbf{3.2}$\times$ & 35           & 2.9$\times$          & 68           & 1.5$\times$ \\
\hline
\textbf{FedKF-} (OCA) & \textbf{31} & \textbf{3.2}$\times$ & 30           & 3.3$\times$          & 55           & 1.8$\times$ \\
\hline
\textbf{FedKF} (ACA) & \textbf{31}  & \textbf{3.2}$\times$ & 34           & 2.9$\times$          & 66           & 1.5$\times$ \\
\hline
\textbf{FedKF} (OCA) & 32           & 3.1$\times$          & \textbf{28}  & \textbf{3.6}$\times$ & \textbf{52}  & \textbf{1.9}$\times$ \\
\hline
\end{tabular}
}
\end{table}

\noindent
\textbf{AMP v.s. Number of Rounds.}
As shown in Fig. \ref{Fig::NumberofC}, FedKF has a faster learning speed (i.e., has higher communication efficiency) than prior solutions when data is heterogeneous (i.e., $\alpha = 0.1$).
Specifically, the number of communication rounds to achieve the same AMP as running FedAvg for 100 rounds on different datasets with $\alpha = 0.1$ is shown in Table \ref{tab:speedup}.
We can observe that the number of communication rounds is significantly reduced by using FedKF.
For example, FedKF (OCA) needs 32, 28, and 52 communication rounds to achieve the same AMP as running FedAvg for 100 rounds on EMNIST, CIRFA-10, and CIFAR-100, respectively.
Thus, FedKF is much more communication-efficient than prior solutions.

\noindent
\textbf{FM v.s. Number of Rounds.}
The FM v.s. number of communication rounds on different datasets with $\alpha = 0.1$ is shown in Fig. \ref{Fig::NumberofC}.
We find that the FMs of FedKF (OCA) are always the lowest among all solutions on all datasets when training is finished, which means FedKF (OCA) achieves better fairness compared with previous solutions.
Besides, the FM curves of FedKF (OCA) are smoother and have smaller fluctuation amplitude compared with other solutions on all the datasets.
Furthermore, compared with FedKF (ACA), FedKF (OCA) has lower FMs, smoother FM curves, and smaller fluctuation amplitude, which means the technique T1 indeed improves model fairness in FedKF.

\noindent
\textbf{WLP v.s. Number of Rounds.}
The WLP v.s. number of communication rounds on different datasets with $\alpha = 0.1$ is shown in Fig. \ref{Fig::NumberofC}.
When training is finished, all the WLPs of our solution FedKF (OCA) are highest among all solutions on all datasets.
On EMNIST and CIFAR-10, the WLPs of both FedKF (ACA) and FedKF (OCA) are higher than previous solutions, while FedKF (OCA) has more smooth WLP curves compared with FedKF (ACA).

\noindent
\textbf{Communication Data Amount.}
For both FedKF (ACA) and FedKF (OCA), in each communication round, their uplink communication (from client to server) data amount is the same as FedAvg, while their downlink communication data amount is doubled compared with FedAvg.
For both FedKF- (ACA) and FedKF- (OCA), their communication data amount is the same as FedAvg.
Therefore, compared with FedKF, FedKF- slightly sacrifices performance to ensure there is no communication traffic increase in each communication round.
Besides, as shown in Table \ref{tab:speedup}, FedKF and FedKF- require much fewer communication rounds in learning compared with FedAvg.
Hence, in most cases, FedKF and FedKF- require less communication data amount than prior solutions.
Note that FedKF- has the least communication consumption since it requires half of the downlink communication data amount in each communication round compared with FedKF.

\noindent
\textbf{Client's Computation Overhead.}
We have a theoretical analysis of the client's computation overhead as follows.
In the local training phase of federated learning, the main computation is forward propagation (FP) and backpropagation (BP) through the models.
To compare the computation overhead of different solutions in the local training phase, we compare the numbers of the FP and BP operations through the models for each batch of the local data.
Table \ref{tab:ptimes} demonstrates the numbers of the FP and BP operations through the models for each batch of the local data in the local training phase in different solutions.
In FedGen, only the last layers of the local model (i.e., predictor) need 2 operations of both FP and BP for each batch of the local data, and the number of the FP and BP operations in the remaining layers is the same as FedAvg.
Therefore, we consider the numbers of the FP and BP operations through the classifier in FedGen as 1.5 and 1.5, respectively.
In FedGKD, the teacher model (the classifier) needs to output the logit of each sample for knowledge distillation, so the number of the FP operations through the classifier is 2.
In FedKF, the numbers of the FP and BP operations through the teacher model (the classifier) are 1 and 1, respectively, and those through the student model (the classifier) are 2 and 2.

\begin{table}[h]
\centering
\caption{The numbers of the FP and BP operations through the models for each batch of the local data in the local training phase in different solutions.}
\label{tab:ptimes}
{
\renewcommand{\arraystretch}{1.4}
\begin{tabular}{c|cc|cc}
\hline
\multirow{2}{*}{Solutions} & \multicolumn{2}{c|}{Classifier} & \multicolumn{2}{c}{Generator} \\
\cline{2-5}
                           & Forward        & Backward       & Forward     & Backward \\
\hline
\hline
FedAvg                     & 1              & 1              & 0           & 0 \\
\hline
FedProx                    & 1              & 1              & 0           & 0 \\
\hline
FedGen                     & 1.5            & 1.5            & 1           & 0 \\
\hline
FedGKD                     & 2              & 1              & 0           & 0 \\
\hline
q-FFL                      & 1              & 1              & 0           & 0 \\
\hline
\textbf{FedKF-}            & 3              & 3              & 1           & 1 \\
\hline
\textbf{FedKF}             & 3              & 3              & 1           & 1 \\
\hline
\end{tabular}
}
\end{table}

\noindent
\textbf{Time Consumption.}
To intuitively compare the time complexity of different solutions, we conduct experiments to report the time consumption of different solutions for 100 communication rounds on different datasets with $\alpha=0.1$.
Note that the active clients in the experiments serially execute \textbf{ClientUpdate} one by one, which means the time consumption in real-world applications is much less than that in the experiments due to its parallel execution of \textbf{ClientUpdate}.
As shown in Table \ref{tab:time}, while FedKF- and FedKF require every client independently to train an extra local generator and conduct local knowledge distillation, the time consumption of our solutions is about 1.8$\times$-1.9$\times$ longer than FedAvg on all three different datasets, which is acceptable.
Our solutions take more time to achieve better capabilities to handle data heterogeneity in FL.

\begin{table}[h]
\Huge
\centering
\caption{Time consumption of different solutions for 100 communication rounds on different datasets with $\alpha=0.1$. The speed-down computes its time consumption ratio compared to FedAvg.}
\label{tab:time}
\resizebox{1\columnwidth}{!}
{
\renewcommand{\arraystretch}{1.4}
\begin{tabular}{c|cc|cc|cc}
\hline
\multirow{2}{*}{Solutions} & \multicolumn{2}{c|}{EMNIST} & \multicolumn{2}{c|}{CIFAR-10} & \multicolumn{2}{c}{CIFAR-100} \\
\cline{2-7}
                & Time     & Speed-down  & Time      & Speed-down  & Time      & Speed-down \\
\hline
\hline
FedAvg          & 6min 43s & 1$\times$   & 45min 12s & 1$\times$   & 45min 27s & 1$\times$ \\
\hline
FedProx         & 7min 20s & 1.1$\times$ & 47min 39s & 1.1$\times$ & 49min 12s & 1.1$\times$ \\
\hline
FedGen          & 8min 11s & 1.2$\times$ & 50min 23s & 1.1$\times$ & 50min 31s & 1.1$\times$ \\
\hline
FedGKD          & 7min 5s  & 1.1$\times$ & 47min 43s & 1.1$\times$ & 48min 11s & 1.1$\times$ \\
\hline
q-FFL           & 7min 4s  & 1.1$\times$ & 47min 53s & 1.1$\times$ & 48min 3s  & 1.1$\times$ \\
\hline
\textbf{FedKF-} & 12min 3s & 1.8$\times$ & 84min 36s & 1.9$\times$ & 84min 40s & 1.9$\times$ \\
\hline
\textbf{FedKF}  & 12min 4s & 1.8$\times$ & 84min 38s & 1.9$\times$ & 84min 42s & 1.9$\times$ \\
\hline
\end{tabular}
}
\end{table}

\begin{table}[h]
\Huge
\centering
\caption{Time consumption needed in different solutions to achieve the same AMP as running FedAvg for 100 rounds on different datasets with $\alpha = 0.1$. The speedup of a solution is computed against FedAvg.}
\label{tab:cmp:time}
\resizebox{1\columnwidth}{!}
{
\renewcommand{\arraystretch}{1.4}
\begin{tabular}{c|cc|cc|cc}
\hline
\multirow{2}{*}{Solutions} & \multicolumn{2}{c|}{EMNIST}          & \multicolumn{2}{c|}{CIFAR-10}             & \multicolumn{2}{c}{CIFAR-100} \\
\cline{2-7}
                      & Time              & Speedup               & Time               & Speedup              & Time               & Speedup \\
\hline
\hline
FedAvg                & 6min 43s          & 1$\times$             & 45min 12s          & 1$\times$            & 45min 27s          & 1$\times$ \\
\hline
FedProx               & $>$6min 43s       & $<$1$\times$          & $>$45min 12s       & $<$1$\times$         & $>$45min 27s       & $<$1$\times$ \\
\hline
FedGen                & 6min 18s          & 1.1$\times$           & $>$45min 12s       & $<$1$\times$         & \textbf{37min 23s} & \textbf{1.2}$\times$ \\
\hline
FedGKD                & $>$6min 43s       & $<$1$\times$          & \textbf{23min 23s} & \textbf{1.9}$\times$ & 39min 31s          & 1.2$\times$ \\
\hline
q-FFL                 & $>$6min 43s       & $<$1$\times$          & $>$45min 12s       & $<$1$\times$         & $>$45min 27s       & $<$1$\times$ \\
\hline
\textbf{FedKF-} (ACA) & \textbf{3min 44s} & \textbf{1.8}$\times$  & 29min 37s          & 1.5$\times$          & $>$45min 27s       & $<$1$\times$ \\
\hline
\textbf{FedKF-} (OCA) & \textbf{3min 44s} & \textbf{1.8}$\times$  & 25min 23s          & 1.8$\times$          & $>$45min 27s       & $<$1$\times$ \\
\hline
\textbf{FedKF} (ACA)  & \textbf{3min 44s} & \textbf{1.8}$\times$  & 28min 47s          & 1.6$\times$          & $>$45min 27s       & $<$1$\times$ \\
\hline
\textbf{FedKF} (OCA)  & 3min 52s          & 1.7$\times$           & 23min 42s          & 1.9$\times$          & 44min 3s           & 1$\times$ \\
\hline
\end{tabular}
}
\end{table}

Table \ref{tab:cmp:time} reports the time consumption in different solutions to achieve the same AMP as running FedAvg for 100 rounds on different datasets with $\alpha=0.1$.
In FL, the total time consumption is obtained by (the training time per round)$\times$(number of rounds needed to converge). As shown in Table \ref{tab:cmp:time}, to achieve the same level of model performance, our solutions are generally faster than others on the EMINST and CIFAR-10 datasets and slower on the CIFAR-100 dataset. Thus, the computation overhead of FedKF and its variants are comparable to other solutions.

\noindent
\textbf{Robustness.}
It can be found from Fig. \ref{Fig::NumberofC} that FedKF is more robust than other solutions (in terms of performance stability) during FL training.
The fluctuation amplitude of AMP in FedKF (OCA) is extremely small compared with FedAvg, FedGen, FedGKD, and q-FFL when data is heterogeneous.

\begin{table}[h]
\centering
\caption{Performance comparison between FedAvg and FedAvg (OCA) on different datasets with $\alpha = 0.1$.}
\label{tab:improve}
{
\renewcommand{\arraystretch}{1.4}
\begin{tabular}{l|l|l|l}
\hline
Datasets & Metrics & FedAvg & FedAvg (OCA)\\
\hline
\hline
\multirow{3}{*}{EMNIST} & AMP (\%)              &  75.61 $\!\pm\!$ 0.92  & 78.38 $\!\pm\!$ 0.64 $\uparrow$ \\
\cline{2-4}
                        & FM ($\times 10^{-3}$) &  7.972 $\!\pm\!$ 0.703 & 5.329 $\!\pm\!$ 1.378 $\downarrow$ \\
\cline{2-4}
                        & WLP (\%)              &  57.07 $\!\pm\!$ 1.03  & 60.88 $\!\pm\!$ 1.87 $\uparrow$ \\
\hline
\multirow{3}{*}{CIFAR-10} & AMP (\%)            &  61.89 $\!\pm\!$ 0.81  & 65.82 $\!\pm\!$ 0.22 $\uparrow$ \\
\cline{2-4}
                        & FM ($\times 10^{-2}$) &  1.807 $\!\pm\!$ 0.240 & 1.194 $\!\pm\!$ 0.142 $\downarrow$ \\
\cline{2-4}
                        & WLP (\%)              &  34.41 $\!\pm\!$ 7.74  & 44.21 $\!\pm\!$ 4.58 $\uparrow$ \\
\hline
\multirow{3}{*}{CIFAR-100} & AMP (\%)           &  29.37 $\!\pm\!$ 0.58  & 34.76 $\!\pm\!$ 0.22 $\uparrow$ \\
\cline{2-4}
                        & FM ($\times 10^{-3}$) & 6.265 $\!\pm\!$ 1.206  & 1.853 $\!\pm\!$ 0.304 $\downarrow$ \\
\cline{2-4}
                        & WLP (\%)              &  17.70 $\!\pm\!$ 0.64  & 25.78 $\!\pm\!$ 0.92 $\uparrow$ \\
\hline
\end{tabular}
}
\end{table}

\noindent
\textbf{Using T1 to Improve FedAvg.}
As mentioned before, we can use T1 to improve prior solutions.
Let FedAvg (OCA) represent the solution using T1.
It simply uses the OCA model as the final model to be used.
Note that, during FedAvg (OCA) training, the global model broadcasted to active clients is still the ACA model.
Table \ref{tab:improve} and Fig. \ref{fig:improve} show the performance comparison between FedAvg and FedAvg (OCA) on different datasets with $\alpha = 0.1$.
In Table \ref{tab:improve}, it can be found that all metrics of FedAvg (OCA) are better than FedAvg.
Fig. \ref{fig:improve} demonstrates that FedAvg (OCA) has a faster learning speed and smaller fluctuation amplitude than FedAvg.

\begin{figure}[h]
\centering
\includegraphics[width=0.85\linewidth]{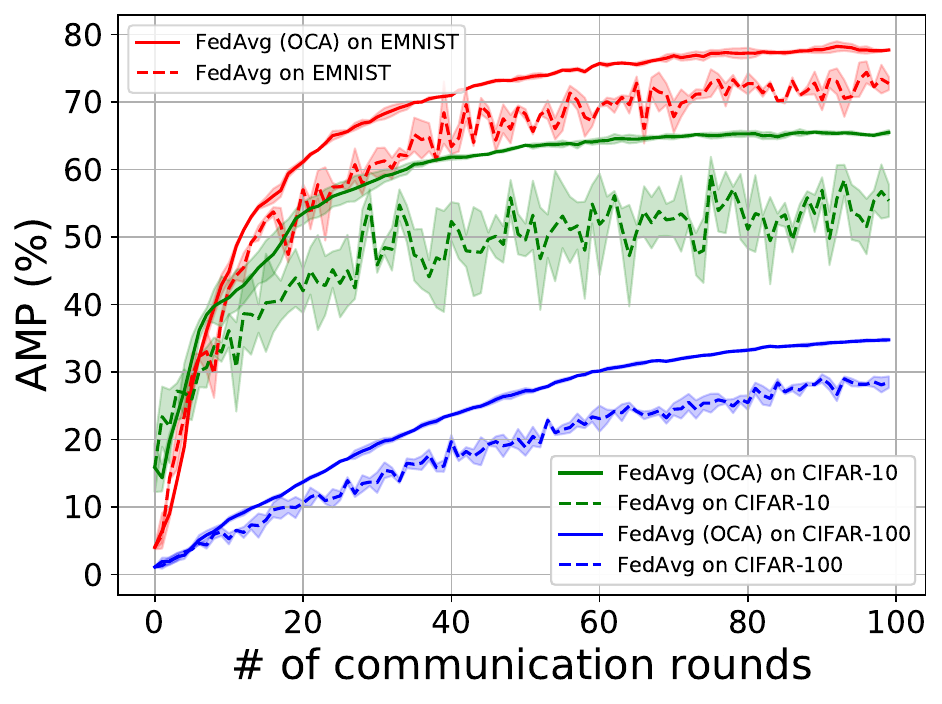}
\caption{AMP of FedAvg w/ and w/o T1 on different datasets with $\alpha = 0.1$.}
\label{fig:improve}
\end{figure}

To further explore under what circumstances T1 can improve FedAvg, we study the impacts of $\alpha$ (the concentration parameter of the Dirichlet distribution) and $K$ (the number of overall clients) on the performance of the global models (i.e., the ACA model and the OCA model).
Fig. \ref{fig:acaoca} shows the AMP of FedAvg v.s. $\alpha$ w/ and w/o T1 on different datasets with different $K$, where ``O" stands for the OCA model, ``A" stands for the ACA model, and $C$ represents the selection rate of active clients.
Note that $C=1$ represents full device participation that is unrealistic in most scenarios due to the presence of stragglers and the increase in communication cost per communication round (e.g., the communication cost per communication round of $C=1$ is 5 times as large as that of $C=0.2$).
In Fig. \ref{fig:acaoca}, it can be found that the AMP of $C=1$ is always the highest, which is consistent with the theoretical proof in \cite{li2019convergence}.
When the number of overall clients $K$ is limited, the more heterogeneous the data, the larger the gap in the AMP of the ACA model between $C=0.2$ and $C=1$.
In this case, compared to the ACA model, the OCA model (that additionally aggregates the latest historical local models of inactive clients) can obtain higher AMP and narrow the gap.

\begin{figure}[h]
\centering
\subfigure[EMNIST]{
    \centering
    \includegraphics[width=1\linewidth]{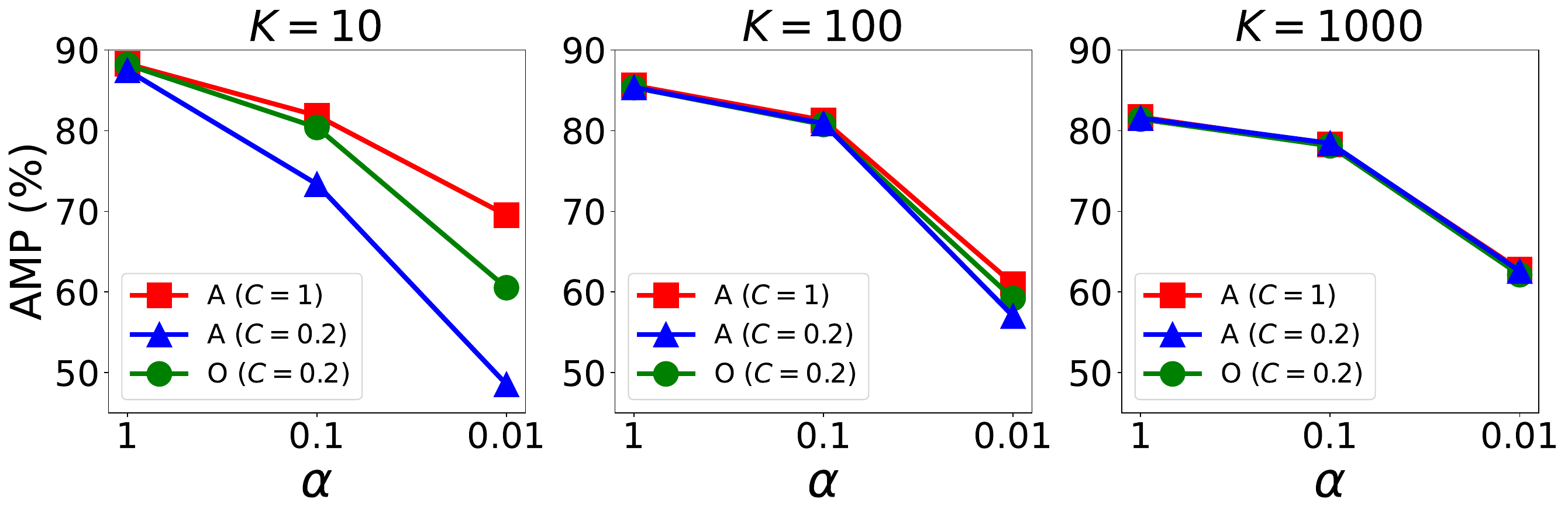}
}
\subfigure[CIFAR-10]{
    \centering
    \includegraphics[width=1\linewidth]{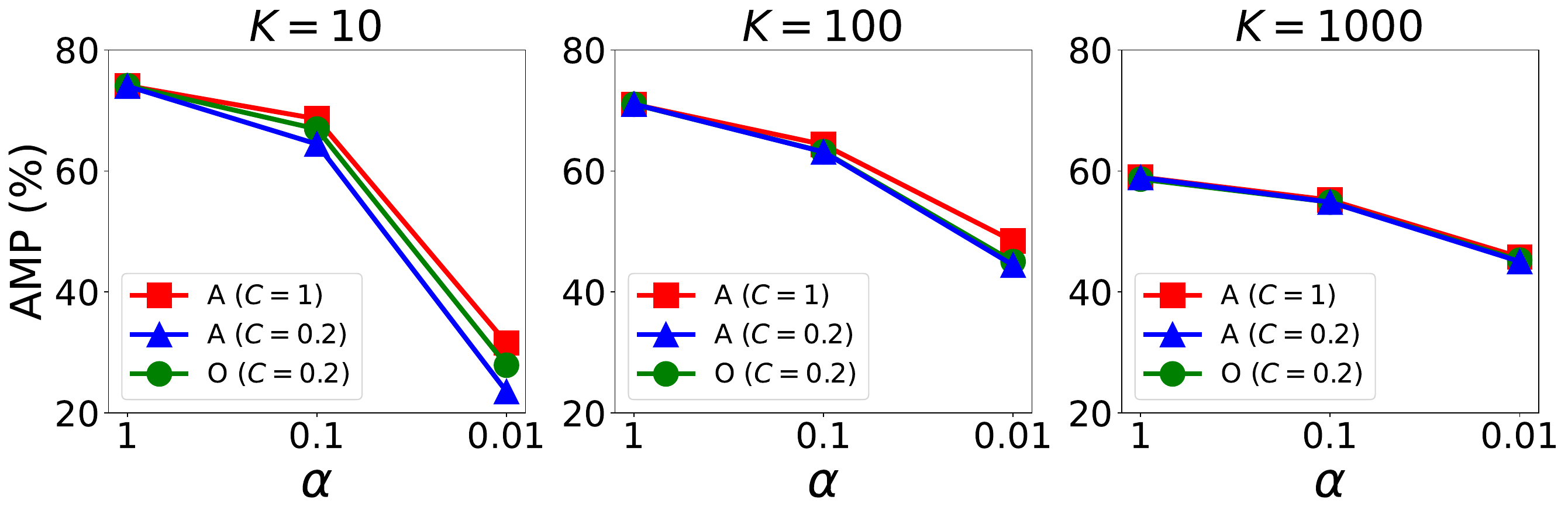}
}
\subfigure[CIFAR-100]{
    \centering
    \includegraphics[width=1\linewidth]{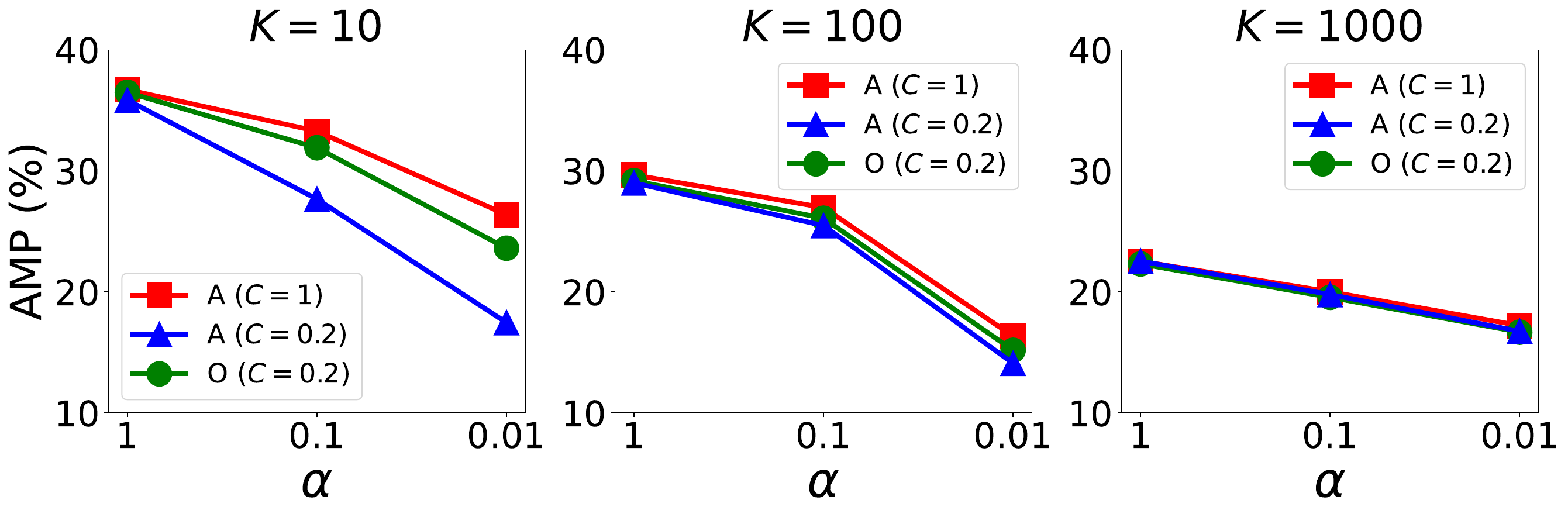}
}
\caption{AMP of FedAvg v.s. $\alpha$ w/ and w/o T1 on different datasets with different $K$.}
\label{fig:acaoca}
\end{figure}

\begin{figure}[h]
\centering
\includegraphics[width=1\linewidth]{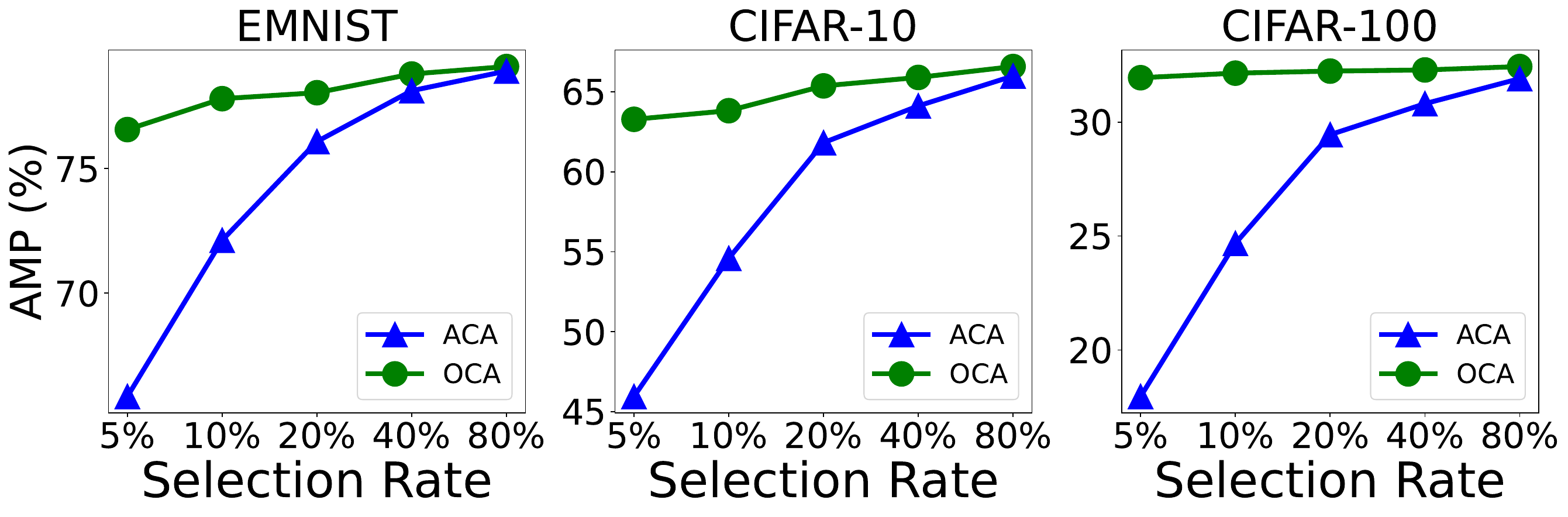}
\caption{AMP of FedAvg v.s. selection rate w/ and w/o T1 on different datasets with $\alpha = 0.1$.}
\label{fig:rate2ao}
\end{figure}

Fig. \ref{fig:rate2ao} shows the AMP of FedAvg v.s. selection rate w/ and w/o T1 on different datasets with $\alpha = 0.1$, where the ACA model is the aggregation of only active clients, and the OCA model is the aggregation of both active and inactive clients by using T1.
Note that when the selection rate is 5\%, only one client is selected as active in each round.
In Fig. \ref {fig:rate2ao}, it is easy to find that the performance (i.e., AMP) of the OCA model is always higher than that of the ACA model when the selection rate is less than 1.
Besides, the ACA model's performance degrades fast with the descent of the selection rate, i.e., by about 13\% on the EMNIST dataset, about 20\% on the CIFAR-10 dataset, and about 14\% on the CIFAR-100 dataset, which means the ACA model's performance is vulnerable to the low selection rate on non-IID data.
Fortunately, with T1, the OCA model's performance degrades slightly from 80\% to 5\% of the selection rate, especially on the CIFAR-100 dataset.
Therefore, the improvement using T1 (in terms of global model performance) is more obvious with a decreasing selection rate.

\section{Conclusion}
\label{conclusion}

In this paper, we have developed FedKF to handle data heterogeneity in FL.
Two novel techniques are developed to achieve the precise global knowledge representation and global-local knowledge fusion, by which the local model drift issue can be alleviated.
We theoretically prove that FedKF can directly turn out to be a good solution in heterogeneous agnostic FL, so FedKF has much broader application scenarios.
According to theoretical analysis and experimental results, FedKF achieves the three design goals (i.e., high model performance, high model fairness, and privacy-preserving) simultaneously.
In summary, the proposed techniques can mitigate data heterogeneity issues and significantly boost FL performance.
There are two directions to launch future work. 
First, we plan to test FedKF on more datasets and make the testbeds more diverse.
Second, we plan to improve FedKF further and make it more lightweight in terms of communication and computation overhead.

\bibliography{references}

\bibliographystyle{IEEEtran}

\vfill

\end{document}